\documentclass[lettersize,journal]{IEEEtran}
\usepackage{amsmath,amsfonts}
\usepackage{array}
\usepackage[caption=false,font=normalsize,labelfont=sf,textfont=sf]{subfig}
\usepackage{textcomp}
\usepackage{stfloats}
\usepackage{url}
\usepackage{verbatim}
\usepackage{graphicx}
\usepackage{cite}
\usepackage{setspace}
\usepackage{lmodern}
\usepackage[numbers]{natbib}  
\usepackage{bm}
\usepackage{comment}
\usepackage{hyperref}
\usepackage{url}
\usepackage{bbm}
\usepackage{amsthm}
\usepackage{makecell}
\usepackage{graphicx}
\usepackage{microtype}
\usepackage{graphicx}
\usepackage{hyperref}
\usepackage{algorithmic}
\usepackage[ruled,vlined]{algorithm2e}
\usepackage{mathtools}
\usepackage{unicode}
\usepackage{booktabs}
\usepackage{multirow}
\usepackage{csquotes}
\usepackage{subcaption}
\usepackage{rotating}
\usepackage{colortbl}
\usepackage{mathrsfs}
\usepackage{xcolor}
\usepackage{threeparttable}
\newtheorem{theorem}{Theorem}[section]

\theoremstyle{definition}

\theoremstyle{remark}

\newcommand{\Mod}[1]{\ (\mathrm{mod}\ #1)}

\begin{document}

\title{Congruence Decomposition with Neural Block Solvers for Large-Scale PCI Assignment}

\author{
	Yeqing Qiu, Chengpiao Huang, Ye Xue, Akang Wang, Fan Xu, Zhipeng Jiang, Dong Zhang, Ruoyu Sun, Qingjiang Shi, and Zhi-Quan Luo
        \thanks{The work was supported in part by the National Key Research and Development Program of China under Grant 2022YFA1003900, the National Natural Science Foundation of China under Grant 62301334, Guangdong Major Project of Basic and Applied Basic Research under Grant 2023B0303000001, and the Guangdong Provincial Key Laboratory of Big Data Computing.\\
        {\em (Corresponding authors: Qingjiang Shi, Zhi-Quan Luo)}
        \\
        Yeqing Qiu is with the School of Science and Engineering, The Chinese University of Hong Kong, Shenzhen, China, and Shenzhen Research Institute of Big Data, Shenzhen, China (email: yeqingqiu@link.cuhk.edu.cn). 

        Chengpiao Huang is with the Department of Industrial Engineering and Operations Research, Columbia University, New York, USA (email: chengpiao.huang@columbia.edu). 
        
        Ye Xue is with the School of Intelligent Systems Engineering, Sun Yat-sen University, Shenzhen, China (email: xuey57@mail.sysu.edu.cn). 

        Fan Xu is with the College of Electronic and Information Engineering, Tongji University, Shanghai, China (email: xxiaof999@tongji.edu.cn). 
        
        Zhipeng Jiang is with the School of Mathematical Sciences, University of Chinese Academy of Sciences, Beijing, China (email: jiangzhipeng@ucas.ac.cn). 
        
        Dong Zhang is with Huawei Technologies, Shenzhen, China (email: zhangdong48@huawei.com). 

        Akang Wang and Ruoyu Sun are with Shenzhen Research Institute of Big Data, Shenzhen, China, and School of Data Science, The Chinese University of Hong Kong, Shenzhen, China (email: wangakang@sribd.cn, sunruoyu@cuhk.edu.cn). 

        Qingjiang Shi is with the School of Computer Science and Technology, Tongji University, Shanghai, China, and Shenzhen Research Institute of Big Data, Shenzhen, China (email: shiqj@tongji.edu.cn).

        Zhi-Quan Luo is with The Chinese University of Hong Kong, Shenzhen, China, and Shenzhen Research Institute of Big Data, Shenzhen, China (email: luozq@cuhk.edu.cn).
        }
	}

\markboth{Under Review}%
{Shell \MakeLowercase{\textit{et al.}}: A Sample Article Using IEEEtran.cls for IEEE Journals}


\maketitle

\begin{abstract}

Physical Cell Identity (PCI) assignment is essential for interference management in dense 5G networks. As cellular networks scale, PCI reuse becomes unavoidable, which may cause collisions, confusions, and multiple forms of modular interference. Jointly mitigating these effects gives rise to a large-scale, multi-objective combinatorial optimization problem that is difficult to solve efficiently at practical network scales. In this work, we propose a congruence decomposition framework with neural block solvers for large-scale PCI assignment. The proposed decomposition exploits the arithmetic structure of PCI values to decouple multiple modular interference objectives into a collection of blockwise Min-\(k\)-Partition subproblems, followed by a graph coloring procedure to resolve PCI conflicts. For the resulting NP-hard Min-\(k\)-Partition subproblems, we develop neural block solvers by parameterizing their relaxed quadratic formulations with graph neural networks, enabling efficient optimization at large scales. Discrete assignments are recovered through conditional expectation rounding with theoretical guarantees. Experiments on synthetic cellular graphs and real-world 5G networks show that the proposed method consistently outperforms existing modular-interference-aware baselines in modular interference reduction, conflict elimination, and computational efficiency.

\end{abstract}

\begin{IEEEkeywords}
PCI Assignment, Congruence Decomposition, Multi-Objective Optimization, Neural Parametrization, Graph Neural Networks
\end{IEEEkeywords}

\section{Introduction}

The advancement of 5G mobile networks has demonstrated great potential to transform wireless communications, enabling ultra-dense deployments, spectrum agility, and diverse application scenarios~\cite{mansoor2017tutorial, ge2016ultra, luolou2015enhanced, xu2021survey}.
However, practical deployments have revealed a persistent gap between theoretical capabilities and actual user experience, especially in dense urban environments~\cite{pierucci2015quality}. 
This gap arises not only from physical infrastructure limitations but also from the lack of intelligent coordination and adaptive parameter configuration~\cite{luo2023srcon, liu2024}. 
As a result, effective optimization of key network parameters has become essential for interference management and performance improvement in large-scale 5G deployments.

\emph{Physical Cell Identity (PCI)} is one of the most important parameters in 5G networks. 
Each cell must be assigned a standardized integer PCI value used in synchronization and mobility management~\cite{3gpp38211, lodhi2023design}. 
Due to the limited range of usable PCI values and dense spatial reuse, improper assignments cause \emph{collisions}, \emph{confusions}, and \emph{modular interference}, which occur when pairs of cells are assigned identical PCI values or their assigned PCIs are congruent under specific moduli~\cite{qiu2025relaxation, gui2018pci2}. 
Jointly minimizing multiple modular interference objectives while eliminating collisions and confusions leads to a large-scale, multi-objective combinatorial optimization problem with coupled arithmetic and graph structures.

Early studies formulate PCI assignment as a graph coloring problem, where vertices represent cells and edges encode pairwise conflict relations. 
Greedy and randomized coloring heuristics~\cite{bandh2009graph,pratap2016randomized} are computationally efficient, but they only address collisions and confusions and do not model modular interference induced by arithmetic residue relations. 
Subsequent heuristic and metaheuristic methods, including genetic algorithms~\cite{panxing2016pci,shen2017novel} and memetic search~\cite{andrade2022physical}, incorporate modular penalties into the objective, but they remain problem-specific and provide limited structural understanding or theoretical guarantees. 
Binary quadratic and mixed-integer programming formulations~\cite{gui2018pci2,fairbrother2018two} improve modeling fidelity, but scale poorly to dense networks and do not explicitly exploit the arithmetic structure underlying multiple modular interference objectives.

Recent efforts on PCI optimization have progressed along two complementary directions: exploiting the arithmetic structure of modular interference and improving solver scalability.
On the structural side, Qiu et al.~\cite{qiu2025relaxation} exploited the Chinese Remainder Theorem (CRT) to decompose the joint mod-$3$ and mod-$30$ interference objectives into Min-$k$-Partition subproblems with $k=3$ and $10$. However, this construction relies on the specific coprime factorization $30=3\times 10$ and is tailored to this particular pair of modular objectives. It does not directly extend to PCI optimization with arbitrary modular interference objectives.
On the scalability side, Qiu et al.~\cite{11149326} proposed the Relaxed Gradient Projection (RGP) method, which relaxes the discrete PCI assignment problem and applies first-order optimization to the resulting continuous formulation. Subsequent work introduced Penalized Mirror Descent (PMD) for solving the decomposed Min-$k$-Partition subproblems~\cite{qiu2025relaxation}, further improving computational efficiency. Nevertheless, these methods optimize directly in the node-wise assignment space and still incur substantial computational cost on large instances.

To further improve scalability, we consider neural parametrization as an alternative.
Rather than treating node-wise assignment variables as free optimization variables, a graph neural network (GNN) generates the relaxed assignments through graph-based message passing. 
This parametrization induces graph-structured dependencies among node assignments and is naturally amenable to parallel computation on large graphs. Recent advances in neural-parametrized optimization have demonstrated the effectiveness of this approach for graph-structured combinatorial problems, including Max-$k$-Cut~\cite{qiu2024ros}, QUBO~\cite{schuetz2022combinatorial}, and scheduling~\cite{9962800}. These developments motivate using GNN parametrization as a scalable block solver for the large Min-$k$-Partition subproblems arising from PCI decomposition.

In this work, we propose a congruence decomposition framework with neural block solvers that jointly address modular interference, collisions, and confusions in large-scale PCI assignment. 
Our key contributions are summarized as follows.

\begin{itemize}
\item {\bf Congruence Decomposition for Multi-Modular PCI Assignment.}  
We develop a congruence decomposition framework for PCI assignment with multiple modular interference objectives and hard collision and confusion constraints. By exploiting arithmetic relations among the involved moduli, the proposed framework decomposes the coupled modular objectives into blockwise Min-$k$-Partition subproblems, followed by a graph-coloring stage to handle collision and confusion constraints.

\item {\bf Neural Block Solver for Min-$k$-Partition.}  
For the resulting NP-hard Min-\(k\)-Partition subproblems, we develop a neural block solver that parameterizes the simplex-relaxed solution using a GNN and optimizes the neural parameters against the corresponding quadratic objective. A deterministic conditional expectation rounding scheme then recovers a discrete assignment without increasing the relaxed objective value.

\item {\bf Evaluation on Synthetic and Real-World Cellular Networks.} 
We evaluate the proposed framework on synthetic random geometric graphs (RGGs) and real-world measurement reports from a commercial cellular network. 
Across both settings, our method simultaneously reduces collisions and confusions while achieving the lowest multi-modular interference among competitive baselines. 
On a real-world network with thousands of cells, it further reduces modular interference and achieves a $2.04\times$ speedup over the strongest decomposition-based baseline, demonstrating its effectiveness for large-scale PCI optimization.

\end{itemize}

\emph{Synopsis.} The subsequent sections of the paper are structured as follows. Section \ref{sec:prob} introduces the system model and formulates the PCI assignment problem. 
Section \ref{sec:decomposition} presents the congruence decomposition framework for multi-modular PCI assignment.
Building upon this decomposition, Section \ref{sec:ros} presents the neural block solver for the resulting Min-$k$-Partition subproblems, together with a deterministic conditional expectation rounding scheme for recovering the discrete solutions. 
Section \ref{sec:exp} presents numerical experiments on synthetic and real-world data. Finally, Section \ref{sec:con} concludes the paper.

\emph{Notations.} 
We denote by $\mathbb{Z}$ and $\mathbb{R}$ the sets of integers and real numbers, respectively, and by $\mathbb{Z}_+$ the set of non-negative integers. For any positive integer $k$, $\mathbb{Z}_k = \{0,1,\dots,k-1\}$, and $\mathbb{Z}_k^N$ denotes the set of $N$-dimensional vectors with entries in $\mathbb{Z}_k$. 
For $x\in\mathbb{R}$, $\lfloor x\rfloor$ is the largest integer not exceeding $x$, and for any positive integer $x$, $[x] = \{1,2,\dots,x\}$.
For $a,b\in\mathbb{Z}$ and any positive integer $k$, we write $a\equiv b \pmod{k}$ if $a-b$ is divisible by $k$. $\gcd(a, b)$ and $\mathrm{lcm}(a, b)$ denote the greatest common divisor and least common multiple of $a$ and $b$, respectively.
We use lowercase and uppercase boldface letters for column vectors and matrices, respectively. For a matrix $\bm{A}$, $A_{i,j}$ is its $(i,j)$-th entry, $\bm{A}_{i\cdot}$ is the $i$-th row, $\bm{A}^\top$  is the transpose, and $\mathrm{Tr}(\bm{A})$ is the trace. Finally, $\mathbbm{1}\{A\}$ is the indicator of event $A$, and $\vert \mathcal{S} \vert$ denotes the cardinality of a set $\mathcal{S}$. 
\begin{table*}[ht]
\centering
\renewcommand{\arraystretch}{1.2} 
\setlength{\tabcolsep}{12pt} 
\begin{tabular}{@{}cll@{}}
\toprule
\textbf{Mod-$p$} & \textbf{Typical interference source} & \textbf{Reference} \\
\midrule
mod-3  & Primary and Secondary Synchronization Signal reuse & \cite{3gpp38211,acedo2015analysis} \\
mod-4  & Physical Broadcast Channel Demodulation Reference Signal reuse & \cite{3gpp38211, andrade2022physical} \\
mod-6  & Uplink and Downlink Demodulation Reference Signal pattern reuse & \cite{3gpp38211, gui2018pci2} \\
mod-30 & Zadoff–Chu root group reuse in PUCCH, PUSCH, SRS, and PRACH & \cite{3gpp38211,zeljkovic2022alpaca} \\
\bottomrule
\end{tabular}
\caption{Common modular interference types in 5G PCI assignment.}
\label{tab:common-k-extended}
\end{table*}

\section{Problem Formulation\label{sec:prob}}

\begin{figure}
    \centering
    \includegraphics[width=\linewidth]{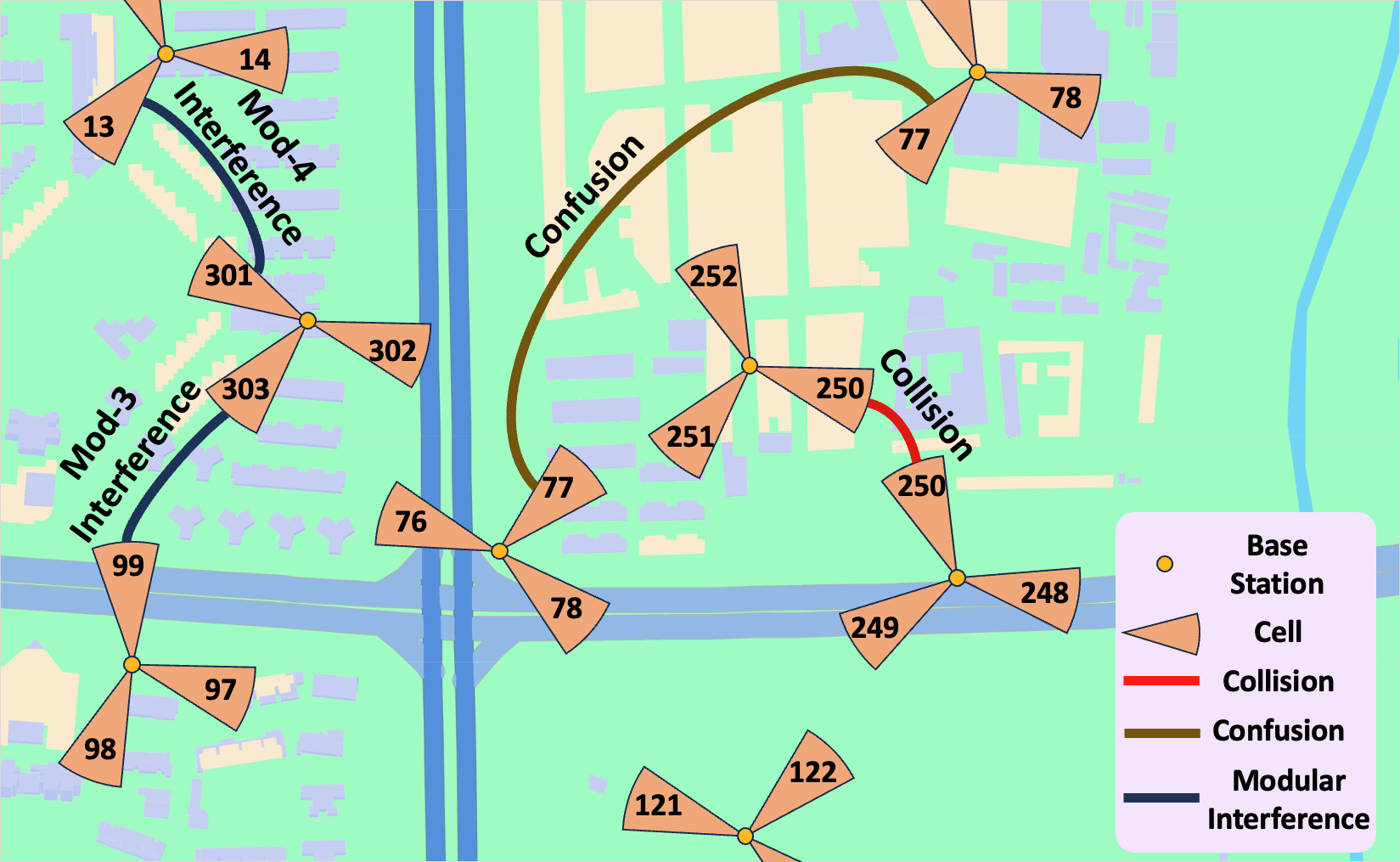}
    \caption{\small This figure illustrates the modular interference, collisions, and confusions of PCI assignment in a 5G wireless network, where each sector denotes a cell and the number inside indicates its assigned PCI value. 
    }
    \label{fig:sysmod}
\end{figure}

For 5G PCI assignment, we model a 5G wireless network with $N$ cells as a graph, where the vertices are given by $\mathcal{V}=\{1, 2, \cdots, N\}$, and vertex $i$ represents the $i$-th cell. Between cells, there is interference and a neighboring relation. The interference between cells is encoded in an \emph{interference matrix} $\bm{W}^{\rm{inter}}\in\mathbb{R}^{N\times N}$, where $W_{i,j}^{\rm{inter}}=W^{\rm{inter}}_{j,i} \ge 0$ is the interference between cell $i$ and cell $j$. For two cells $i$ and $j$, we say that $j$ is a \emph{neighboring cell} of $i$ if the signal of cell $j$ can be received by devices connected to cell $i$. The set of neighboring cell pairs is denoted by $\mathcal{E}_1\subseteq\mathcal{V}\times\mathcal{V}$. We further define the \emph{second-order neighboring set}
\[
\mathcal{E}_2 = \{ (i,j) \in \mathcal{V}\times\mathcal{V} \mid \text{$\exists\ell\in\mathcal{V}$ such that $(\ell,i),(\ell,j)\in\mathcal{E}_1$} \},
\]
which consists of cell pairs that share the same neighbor. We let $\mathcal{E}=\mathcal{E}_1\cup\mathcal{E}_2$. Similar to \cite{andrade2022physical}, we assume that $\bm{W}^{\rm{inter}}$, $\mathcal{E}_1$, and $\mathcal{E}_2$  have been estimated and are known.

The task of the 5G PCI assignment problem is to assign to each cell $i\in\mathcal{V}$ an integer $\text{PCI}_i\in\mathbb{Z}_{1008}$, called the \emph{PCI value}, to minimize \emph{modular interference} and eliminate \emph{collisions} and \emph{confusions}. The modular interference and PCI conflicts considered in this work are illustrated in Figure~\ref{fig:sysmod}, and are detailed as follows
\begin{itemize}
    \item \emph{Mod-$p$ interference} occurs between two cells $i$ and $j$, if their PCIs have the same mod $p$ value, i.e.~$\text{PCI}_i \equiv \text{PCI}_j \pmod{p}$. In this case, the magnitude of mod-$p$ interference between the two cells is given by $W^{\rm{inter}}_{i,j}$. Such interference can lead to potential resource conflicts and signal degradation during the channel-sounding process and negatively impact network performance. The common modular interferences are listed in Table~\ref{tab:common-k-extended}. For example, Figure~\ref{fig:sysmod} illustrates the mod-$3$ interference between cells with PCI $99$ and $303$, and the mod-$4$ interference between cells with PCI $13$ and $301$. 
    \item A \emph{collision} arises when two neighboring cells $i$ and $j$ share the same PCI, that is, $(i,j)\in\mathcal{E}_1$ and $\text{PCI}_i=\text{PCI}_j$. It prevents a UE from determining the serving cell and establishing a connection \cite{andrade2022physical}. For instance, two neighboring cells in Figure~\ref{fig:sysmod} are assigned PCI $250$, resulting in a PCI collision.
    \item A \emph{confusion} arises when two cells $(i,j)\in\mathcal{E}_2$ share the same PCI. In other words, this means that two cells $i,j$ sharing a common neighbor have the same PCI. This can potentially cause a UE confusion during cell handovers. For example, Figure~\ref{fig:sysmod} depicts a PCI confusion scenario where two non-neighboring cells share PCI $77$ and have a common neighbor with PCI $252$. 
\end{itemize}

Mathematically, the PCI assignment problem can be formulated as
\begin{subequations}    
\begin{align}
    \min_{\textbf{PCI}}& \quad 
    \Bigg( 
    \sum_{i,j=1}^NW^{\rm{inter}}_{i,j} \cdot \mathbbm{1}\{\text{PCI}_i\equiv\text{PCI}_j\Mod{p_1}\}, \label{obj-modp1} \\
    & \quad\ \  \sum_{i,j=1}^NW^{\rm{inter}}_{i,j} \cdot \mathbbm{1}\{\text{PCI}_i\equiv\text{PCI}_j\Mod{p_2}\} , \label{obj-modp2} \\
    & \quad\ \  \cdots \notag\\
    & \quad\ \  \sum_{i,j=1}^NW^{\rm{inter}}_{i,j} \cdot \mathbbm{1}\{\text{PCI}_i\equiv\text{PCI}_j\Mod{p_m}\} \Bigg), \label{obj-modpm} \\
    \text{s.t.}&\quad  \text{PCI}_i\neq\text{PCI}_j , \quad \forall (i,j)\in \mathcal{E}, \label{cons-coll-conf} \\
&\quad  \text{PCI}_i \in \mathbb{Z}_{1008}, \quad \forall i\in \mathcal{V}, \label{cons-range}
\end{align}\label{prob:pci}
\end{subequations}
where $\textbf{PCI} = (\text{PCI}_1,...,\text{PCI}_N)$.
This is a multi-objective optimization problem where the objectives are to minimize the modular interference \eqref{obj-modp1} -- \eqref{obj-modpm}, and the constraints include eliminating collisions and confusions \eqref{cons-coll-conf} and the PCI range requirement \eqref{cons-range}. In practice, a common set of mod values is $(p_1,p_2,p_3,p_4)=(3, 4, 6, 30)$ as shown in Table~\ref{tab:common-k-extended}.

\section{Congruence Decomposition for Multi-Modular PCI Assignment \label{sec:decomposition}}

This section develops a congruence decomposition framework for PCI assignment with multiple modular interference objectives. By exploiting the arithmetic relations among the involved moduli, the proposed framework decomposes the modular assignment stage into a sequence of blockwise Min-\(k\)-Partition subproblems, followed by graph coloring for handling collision and confusion constraints.
We refer to this construction as \emph{congruence decomposition}, as it represents each modular congruence through equality relations over a subset of prime-power coefficient blocks.

\subsection{Decomposition Framework}

Consider an integer variable $x_i$ associated with each node $i$. When the interference term depends on modular equalities $(x_i \bmod p_\ell)$ for multiple moduli $\{p_1, p_2, \dots, p_m\}$, all modular relations can be equivalently represented by $(x_i \bmod p)$ with $p = \mathrm{lcm}(p_1, \dots, p_m)$, since identical mod-$p$ values imply identical residues under every $p_\ell$. 
Our decomposition framework is based on a division-with-remainder reformulation. The optimization objectives depend on each cell's PCI through the mod $p_1,...,p_m$ values, which are uniquely determined by its mod $p=\mathrm{lcm}(p_1, \dots, p_m)$ value.
This motivates a standard division-with-remainder reformulation:
\begin{align*}
    \text{PCI}_i=p q_i+r_i,
\end{align*}
where $q_i=\left\lfloor\frac{\text{PCI}_i}{p}\right\rfloor\in \mathbb{Z}_{+}$ is the \emph{quotient}, and $r_i\in\mathbb{Z}_p$ is the mod $p$ value of $\text{PCI}_i$.

This representation naturally separates the modular component $r_i$, which determines all interference relations, from the quotient $q_i$, which enforces the remaining structural constraints. Substituting this representation into the general PCI optimization yields:
\begin{subequations}
\begin{align}
    \min_{\bm{q},\bm{r}} & \quad \Bigg(  \sum_{i,j=1}^N W^{\rm{inter}}_{i,j} \cdot \mathbbm{1}\{r_i\equiv r_j\Mod{p_1}\}, \notag\\
    & \quad\ \  \sum_{i,j=1}^N W^{\rm{inter}}_{i,j} \cdot \mathbbm{1}\{r_i \equiv r_j\Mod{p_2}\} , \notag\\
    & \quad\ \ \cdots, \notag\\
    & \quad\ \  \sum_{i,j=1}^N W^{\rm{inter}}_{i,j} \cdot \mathbbm{1}\{r_i \equiv r_j\Mod{p_m}\} \Bigg), \notag\\
    \text{s.t.}& \quad \bm{q}=[q_1,\cdots, q_N]\in \mathbb{Z}_+^N, \notag \\
    & \quad \bm{r}=[r_1,\cdots, r_N]\in \mathbb{Z}_{p}^N, \notag \\
    & \quad q_i \neq q_j \text{ or } r_i \neq r_j, \quad \forall (i,j)\in \mathcal{E} \notag\\
    & \quad p\times \bm{q} + \bm{r} \in \mathbb{Z}_{1008}^N. \notag
\end{align} 
\end{subequations}

Since all interference terms depend solely on $\bm{r}$, the problem decomposes into two sequential stages:
\begin{itemize}
    \item {\bf Sub-problem 1: Modular assignment.} Determine $\bm{r}$ to jointly minimize all modular interferences:
\begin{equation}
\begin{aligned}
     \min_{\bm{r}}&\quad \Bigg( \sum_{i,j=1}^N W^{\rm{inter}}_{i,j} \cdot \mathbbm{1}\{r_i\equiv r_j\Mod{p_1}\},   \\
            & \quad\ \  \sum_{i,j=1}^N W^{\rm{inter}}_{i,j} \cdot \mathbbm{1}\{r_i \equiv r_j\Mod{p_2}\} ,\\
            & \quad\ \ \cdots,  \\
     &\quad\ \ \sum_{i,j=1}^N W^{\rm{inter}}_{i,j} \cdot \mathbbm{1}\{r_i \equiv r_j\Mod{p_m}\} \Bigg),  \\
    \text{s.t.}&\quad  \bm{r}\in \mathbb{Z}_{p}^N.
\end{aligned}\label{subprob:mod_val_multi}
\end{equation}
    \item {\bf Sub-problem 2: Quotient assignment $\bm{q}$.} Given $\bm{r}$, assign $\bm{q}$ to eliminate residual conflicts while maintaining feasible ranges: 
    \begin{equation}
        \begin{aligned}
            \text{find}&\quad \bm{q}=(q_1, \cdots, q_N), \\
            \text{s.t.}&\quad q_i\neq q_j, \quad \forall (i, j)\in\mathcal{E}\text{ with } r_i = r_j , \\
            & \quad \bm{q}\in \mathbb{Z}_+^N, \\ 
            & \quad p \times \bm{q}+\bm{r}\in \mathbb{Z}_{1008}^N. 
        \end{aligned} \label{subprob:coloring}
    \end{equation}
\end{itemize}
This yields a clean modular–quotient decomposition, where $\bm{r}$ governs modular interference and $\bm{q}$ resolves remaining PCI conflicts.

\begin{figure*}[t]
    \centering
    \includegraphics[width=\linewidth]{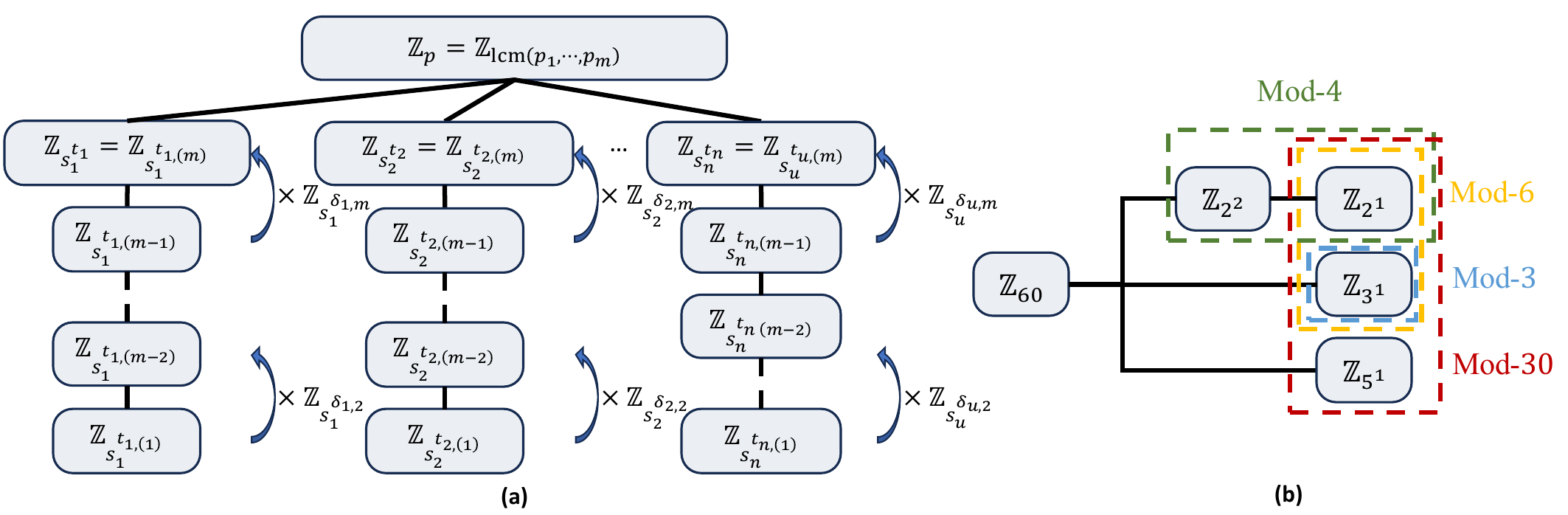}
    \caption{(a) Hierarchical congruence decomposition of $\mathbb Z_p$ into prime-power coefficient blocks. (b) The resulting multi-modular decomposition for PCI assignment simultaneously addressing mod-$3$, mod-$4$, mod-$6$, and mod-$30$ interferences. }
    \label{fig:PCI_decomp_full}
\end{figure*}
\subsection{Congruence Decomposition for Arbitrary Modular Interference Objectives}

We now study Subproblem 1 for arbitrary, possibly non-coprime moduli $\{p_\ell\}_{\ell\in[m]}$. 
While previous work~\cite{qiu2025relaxation} relied on the specific coprime factorization \(30=3\times10\) for the mod-\(3\) and mod-\(30\) objectives, we extend the decomposition to arbitrary modular objectives by combining the CRT with an $s_u$-adic decomposition that separates the contributions of different moduli within a common mod-$p$ representation.

We first consider the prime factorization of $p = \text{lcm}(p_1,\dots,p_m)$:
\[
p = s_1^{t_1}\cdots s_n^{t_n},
\]
where $s_1,...,s_n$ are the prime factors and $t_1,...,t_n\ge 1$. 
For any residue \(r\in\mathbb{Z}_p\), define its CRT coordinates by
\[
r^{(u)} := r \bmod s_u^{t_u}\in \mathbb{Z}_{s_u^{t_u}},
\qquad u\in[n].
\]
By the CRT, the mapping
\[
r \longleftrightarrow \big(r^{(1)},\ldots,r^{(n)}\big)
\in \prod_{u=1}^n \mathbb{Z}_{s_u^{t_u}}
\]
is bijective, and we write
\[
r=\mathrm{CRT}\big(r^{(1)},\ldots,r^{(n)}\big).
\]

We then examine how each modulus \(p_\ell\) depends on these CRT coordinates. Write
\[
p_{\ell} = s_1^{t_{1,\ell}} \cdots s_n^{t_{n,\ell}},
\]
where $0\le t_{u,\ell}\le t_u$ for each $u=1,...,n$. 

Then congruence modulo \(p_\ell\) is equivalent to congruence modulo \(s_u^{t_{u,\ell}}\) for every prime factor \(s_u\). 
Hence \(r \bmod p_\ell\) is fully determined by the truncated CRT coordinates
\[
r^{(u)} \bmod s_u^{t_{u,\ell}},\qquad u\in[n].
\]
This observation motivates an \(s_u\)-adic decomposition of each coordinate \(r^{(u)}\), so that different modular objectives can be expressed through different subsets of lower-order prime-power components.

We now decompose each mod-\(s_u^{t_u}\) coordinate. 
Let the exponents \(\{t_{u,\ell}\}_{\ell=1}^m\) of \(s_u\) be sorted as
\[
    t_{u,(1)}\le \cdots \le t_{u,(m)}=t_u ,
\]
and set \(t_{u,(0)}=0\). 
Then any residue \(r^{(u)}\in\mathbb{Z}_{s_u^{t_u}}\) admits the following \(s_u\)-adic block decomposition:
\begin{align}
    r^{(u)}=\sum_{h=1}^{m}r^{(u, h)}s_u^{t_{u, (h-1)}}, \label{eq:adic_based}
\end{align}
where $r^{(u, h)}\in\mathbb{Z}_{s_u^{\delta_{u, h}}}$ and $\delta_{u,h}=t_{u, (h)}-t_{u, (h-1)}$. 
When \(\delta_{u,h}=0\), the corresponding block is degenerate, i.e., \(\mathbb{Z}_{s_u^{\delta_{u,h}}}=\mathbb{Z}_1\), and thus carries no degree of freedom. 
The nondegenerate blocks group the \(s_u\)-adic digits of \(r^{(u)}\) according to the exponent intervals \((t_{u,(h-1)},t_{u,(h)}]\), with zero-length intervals corresponding to degenerate blocks.

Since the residue modulo \(p_\ell\) only depends on powers up to \(t_{u,\ell}\) for each prime factor \(s_u\), it is fully determined by the following coefficient blocks
\begin{align}
\left\{
r^{(u,h)}
\;\middle|\;
u\in[n],\ t_{u,(h)}\le t_{u,\ell}
\right\}, \label{def:coeff}
\end{align}
Equivalently, we define the corresponding block-index set as
\begin{align}
    \mathcal{J}_\ell=\{(u, h)\mid u\in[n], t_{u, (h)}\le t_{u, \ell}\} \label{def:block}
\end{align}
We formalize this coefficient-based characterization of modular equality in the following theorem.
\begin{theorem}[Blockwise Characterization of Modular Equality]
\label{thm:modular-block-equivalence}
For any two residues $r_i,r_j\in\mathbb{Z}_p$,  we have
\begin{align*}
r_i \equiv r_j \pmod{p_\ell}
\end{align*}
if and only if 
\begin{align*}
r_i^{(u,h)} = r_j^{(u,h)}, \qquad \forall (u,h)\in\mathcal{J}_\ell.
\end{align*}
\end{theorem}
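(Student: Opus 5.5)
Our plan is to reduce the statement, prime by prime, to a question about the low-order $s_u$-adic digits of each CRT coordinate. Since $p_\ell \mid p$, the condition $r_i \equiv r_j \pmod{p_\ell}$ is equivalent, by the CRT applied to the pairwise coprime factors $s_u^{t_{u,\ell}}$ of $p_\ell$, to requiring $r_i \equiv r_j \pmod{s_u^{t_{u,\ell}}}$ for every $u\in[n]$. When $t_{u,\ell}=0$, that congruence is vacuous. Because $s_u^{t_{u,\ell}}$ divides $s_u^{t_u}$, and $r^{(u)} = r \bmod s_u^{t_u}$, each such congruence can be rewritten as $r_i^{(u)} \equiv r_j^{(u)} \pmod{s_u^{t_{u,\ell}}}$. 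After this step the claim splits into $n$ independent single-prime statements.

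Fix $u$. The exponent $t_{u,\ell}$ is one of the sorted values, so $t_{u,\ell} = t_{u,(h^*)}$ for some index $h^*$. If several indices share this value, take the largest such $h^*$. With this choice, the pairs $(u,h)\in\mathcal{J}_\ell$ are exactly those with $h\le h^*$, because the $t_{u,(h)}$ are nondecreasing.

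We then use the block decomposition \eqref{eq:adic_based}. Each block $r^{(u,h)}$ lies in $\mathbb{Z}_{s_u^{\delta_{u,h}}}$, so the partial sum $\sum_{h\le h^*} r^{(u,h)} s_u^{t_{u,(h-1)}}$ lies in $\mathbb{Z}_{s_u^{t_{u,(h^*)}}}$. This follows from a telescoping bound: $\sum_{h\le h^*}(s_u^{\delta_{u,h}}-1)s_u^{t_{u,(h-1)}} = s_u^{t_{u,(h^*)}}-1$. Every term with $h>h^*$ is divisible by $s_u^{t_{u,(h-1)}}$ with $t_{u,(h-1)}\ge t_{u,(h^*)}$. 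Hence
\[
r^{(u)} \bmod s_u^{t_{u,\ell}} = \sum_{h\le h^*} r^{(u,h)} s_u^{t_{u,(h-1)}},
\]
so the residue modulo $s_u^{t_{u,\ell}}$ is a function of the blocks indexed by $\mathcal{J}_\ell$. This already gives the ``if'' direction.

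For the ``only if'' direction, we need the map from the blocks $(r^{(u,1)},\dots,r^{(u,h^*)})$ to this partial sum to be injective, i.e., uniqueness of the mixed-radix representation. Counting settles it: the map sends a set of size $\prod_{h\le h^*} s_u^{\delta_{u,h}} = s_u^{t_{u,(h^*)}}$ onto $\mathbb{Z}_{s_u^{t_{u,(h^*)}}}$, which has the same size. Surjectivity follows from digit extraction by successive division with remainder, so the map is a bijection. Degenerate blocks with $\delta_{u,h}=0$ contribute the factor $1$ and are forced equal to $0$, so they cause no trouble.

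Combining over all $u$ completes the proof. We expect the main obstacle to be bookkeeping rather than substance: tied exponents, the choice of $h^*$, and the case $t_{u,\ell}=0$, where $\mathcal{J}_\ell$ contains only degenerate blocks for that $u$. We would handle these by the largest-index convention above and the remark that degenerate blocks are identically zero.
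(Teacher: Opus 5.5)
Your proposal is correct and follows essentially the same route as the paper's proof. Both use the CRT to reduce $r_i\equiv r_j \pmod{p_\ell}$ to the prime-power congruences modulo $s_u^{t_{u,\ell}}$, and then invoke uniqueness of the blockwise $s_u$-adic expansion on the low-order blocks indexed by $\mathcal{J}_\ell$. Your version makes explicit three things the paper asserts in a line: the telescoping bound identifying the truncated sum with $r^{(u)} \bmod s_u^{t_{u,\ell}}$, the largest-index convention for tied exponents, and the counting argument for uniqueness of the mixed-radix representation.
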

\begin{proof}
See Appendix~\ref{proof:lamme:modular-block-equivalence}.
\end{proof}

Theorem~\ref{thm:modular-block-equivalence} shows that each modular congruence can be replaced by equality constraints on a subset of coefficient blocks. Accordingly, Subproblem 1 admits the following blockwise reformulation:
\begin{equation}
\begin{aligned}
\min_{\{\bm{r}^{(u, h)}\}} \quad&
\Bigg( \sum_{i,j=1}^N W^{\rm{inter}}_{i,j}\prod_{(u,h)\in\mathcal{J}_1} \mathbbm{1}\{r_i^{(u, h)} = r_j^{(u, h)}\}, \\  
& \sum_{i,j=1}^N W^{\rm{inter}}_{i,j}  \prod_{(u, h)\in\mathcal{J}_2} \mathbbm{1}\{r_i^{(u, h)} = r_j^{(u, h)}\}, \\
&\cdots, \\ 
&\sum_{i,j=1}^N W^{\rm{inter}}_{i,j}  \prod_{(u, h)\in\mathcal{J}_m} \mathbbm{1}\{r_i^{(u, h)} = r_j^{(u, h)}\} 
\Bigg), \\
\text{s.t.}\quad& \bm{r}^{(u, h)}\in\mathbb{Z}_{s_u^{\delta_{u, h}}}^N, \forall u\in [n], h\in[m].
\end{aligned}\label{eq:equi_problem}
\end{equation}

To solve this problem, we propose to update each $\bm{r}^{(u,h)}$ sequentially while fixing the other $\bm{r}^{(u^\prime,h^\prime)}$. This yields a weighted Min-$k$-Partition subproblem with $k=s_u^{\delta_{u, h}}$:
\begin{equation}
\begin{aligned}
\min_{\bm{r}^{(u,h)}} \quad & \sum_{\ell : (u,h)\in \mathcal{J}_\ell} \lambda_\ell \sum_{i,j=1}^N W^{\rm{inter}}_{i,j}\cdot \mathbbm{1}\{r_i^{(u,h)} = r_j^{(u,h)}\}  \\
&\cdot \prod_{\substack{(u^\prime,h^\prime)\in\mathcal{J}_\ell \\ (u^\prime,h^\prime)\ne(u,h)}} \mathbbm{1}\{r_i^{(u^\prime,h^\prime)} = r_j^{(u^\prime,h^\prime)}\}, \\[4pt]
\text{s.t.}\quad & \bm{r}^{(u,h)} \in \mathbb{Z}_{s_u^{\delta_{u,h}}}^N. 
\end{aligned}\label{eq:block-subproblem}
\end{equation}
Each update minimizes the conditional interference determined by its active blocks, where the weights $\{\lambda_\ell\}_{\ell\in[m]}$ encode the relative significance of different modular objectives.
This blockwise formulation defines a conditional partitioning problem: given the current assignments of other blocks, each subproblem searches for the optimal $k$-way partition of nodes in $\mathbb{Z}_{s_u^{\delta_{u,h}}}$ that most reduces modular interference.  
The procedure naturally admits a BCD scheme, whose convergence is guaranteed by the following theorem. 

\begin{theorem}[Finite-Step Termination of BCD]
\label{thm:convergence_BCD}
Consider the BCD procedure whose block update is given by \eqref{eq:block-subproblem}. 
If each block update is accepted only when it decreases its associated objective, 
then the overall procedure terminates in a finite number of updates.

\end{theorem}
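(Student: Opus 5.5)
The plan is to exhibit a single scalar potential function that strictly decreases with every accepted block update, and then to use finiteness of the search space. Fix the weights $\{\lambda_\ell\}_{\ell\in[m]}$, which are held constant throughout the BCD procedure. Let $\bm{R}=\{\bm{r}^{(u,h)}\}$ denote the full collection of blocks. For each $\ell\in[m]$ write
\[
f_\ell(\bm{R})=\sum_{i,j=1}^N W^{\rm{inter}}_{i,j}\prod_{(u,h)\in\mathcal{J}_\ell}\mathbbm{1}\{r_i^{(u,h)}=r_j^{(u,h)}\},
\]
which is the $\ell$-th objective in \eqref{eq:equi_problem}. The potential is the scalarized objective $F(\bm{R})=\sum_{\ell=1}^m\lambda_\ell f_\ell(\bm{R})$.

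The key step is to show that, for any fixed block $(u,h)$, the block objective in \eqref{eq:block-subproblem} equals $F$ minus a term that does not depend on $\bm{r}^{(u,h)}$. To see this, split $F$ according to whether $(u,h)\in\mathcal{J}_\ell$.

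For indices $\ell$ with $(u,h)\in\mathcal{J}_\ell$, factor the indicator $\mathbbm{1}\{r_i^{(u,h)}=r_j^{(u,h)}\}$ out of the product. This reproduces exactly the $\ell$-th summand of \eqref{eq:block-subproblem}, weighted by $\lambda_\ell$.

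For indices $\ell$ with $(u,h)\notin\mathcal{J}_\ell$, the function $f_\ell$ involves only the other blocks. Hence $\sum_{\ell:(u,h)\notin\mathcal{J}_\ell}\lambda_\ell f_\ell$ is constant while the other blocks are held fixed.

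Consequently, if an update of $\bm{r}^{(u,h)}$ strictly decreases the block objective by some amount $\Delta>0$, then $F$ also decreases by exactly $\Delta$. I would state this as a short lemma, since it is the whole content of the argument.

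Next I would use finiteness. The feasible set $\prod_{u,h}\mathbb{Z}^N_{s_u^{\delta_{u,h}}}$ is finite; by the CRT and the $s_u$-adic bijection it has cardinality $p^N$. Therefore $F$ takes only finitely many distinct values. Every accepted update strictly decreases $F$, so no configuration can be visited twice along the sequence of accepted iterates. The number of accepted updates is therefore at most $p^N-1$, or more sharply, at most the number of distinct values of $F$ minus one.

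To conclude termination, I would make the stopping rule explicit. The procedure cycles through the blocks and stops after a full sweep in which no update is accepted. Between two accepted updates there can be at most one sweep's worth of rejected attempts, and there are finitely many blocks. Hence the total number of update attempts is also finite.

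The main obstacle is not mathematical depth but precision about the hypotheses. The argument needs three things:
\begin{itemize}
\item the weights $\lambda_\ell$ must be fixed and nonnegative, so that the same potential $F$ is used throughout;
\item acceptance must require a \emph{strict} decrease, otherwise the procedure could cycle among configurations with equal $F$ values;
\item a stopping rule must be stated, as above.
\end{itemize}
I would spell out these assumptions at the start of the proof. The decomposition identity, that the block objective equals $F$ minus a term independent of $\bm{r}^{(u,h)}$, also deserves care, because the product over $\mathcal{J}_\ell\setminus\{(u,h)\}$ must match exactly. I would verify it directly from the definition \eqref{def:block}.
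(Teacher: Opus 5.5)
Your proposal is correct and follows the same route as the paper. Both use the weighted scalarization $\sum_\ell \lambda_\ell f_\ell$ as a potential that strictly decreases with every accepted update, and finiteness of the configuration space then forces termination. Your lemma that the block objective in \eqref{eq:block-subproblem} differs from this potential only by a term independent of $\bm r^{(u,h)}$ makes explicit a step the paper simply asserts, namely that each accepted block update gives $\Lambda_{\rm new}<\Lambda_{\rm old}$. That identity holds for any real weights, so the nonnegativity of the $\lambda_\ell$ that you list as a hypothesis is not actually needed.
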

\begin{proof}
    See Appendix \ref{proof:thm:convergence_BCD}.
\end{proof}

As a concrete specialization, consider the PCI assignment with
\[
(p_1,p_2,p_3,p_4)=(3,4,6,30),
\]
with $p=\operatorname{lcm}(3,4,6,30)=60=2^2\cdot 3\cdot 5$. The above construction then yields a four-block residue decomposition of $\mathbb Z_{60}$, shown in Fig.~\ref{fig:PCI_decomp_full} (b). In this representation, the mod-$3$ objective depends only on the ternary block, the mod-$4$ objective depends on the two binary blocks, the mod-$6$ objective depends on the binary and ternary blocks jointly, and the mod-$30$ objective depends on all blocks except the higher binary block. Hence the PCI interference criteria are converted into a nested family of blockwise equality relations, and each corresponding update reduces to a weighted Min-$k$-Partition subproblem, which will be addressed in Section~\ref{sec:ros}.

\subsection{Graph Coloring for Quotient Assignment}\label{sec:quotient}

We now revisit sub-problem~2 in \eqref{subprob:coloring}, which seeks a quotient assignment $\bm{q}$ given the mod-$p$ values $\bm{r}$ obtained from sub-problem 1.

Since collisions and confusions can occur only between constrained cell pairs with identical remainders, the constraints in \eqref{subprob:coloring} reduce to the edge set
\begin{align*}
    \mathcal{E}^\prime \coloneq \{(i,j)\in\mathcal{E} \mid r_i=r_j\}.
\end{align*}
Hence sub-problem 2 is equivalent to a \emph{graph coloring problem}~\cite{bandh2009graph} on the graph $(\mathcal{V},\mathcal{E}^\prime)$, where colors correspond to quotient values. We apply GGC~\cite{bandh2009graph,coloring} for efficient approximate solutions.

The remaining issue is the PCI range constraint $pq_i+r_i \in\mathbb{Z}_{1008}$. Following \cite{qiu2025relaxation}, we adopt a heuristic post-processing step: cells violating the range are reassigned feasible quotients that minimize additional collisions and confusions, processed in descending order of node degree (including second-order neighbors) to prioritize constrained regions. 

In summary, sub-problem 2 reduces to graph coloring on the graph $(\mathcal{V}, \mathcal{E}^\prime)$, complemented by a tailored adjustment procedure to enforce PCI feasibility.

\subsection{Summary}

Below, we summarize our framework for PCI assignment. 
The original problem is decomposed into modular sub-blocks, leading to two stages: 
\begin{enumerate}
    \item solving a sequence of blockwise Min-$k$-Partition subproblems that progressively minimize multi-modular interference, generalizing the graph-partition approach~\cite{qiu2025relaxation} beyond the coprime setting; 
    \item handling collisions and confusions via graph coloring under the PCI range constraint.
\end{enumerate}
The full procedure is outlined in Algorithm~\ref{alg:blockwise}. For brevity, we refer to the resulting multi-modular decomposition procedure as MMD.

\begin{algorithm}[t]
\KwData{Interference matrix $\bm{W}$, edge set $\mathcal{E}$, modulus set $\{p_1,\dots,p_m\}$, and weights $\{\lambda_1,\cdots,\lambda_m\}$.}
\KwResult{A PCI assignment $\textbf{PCI} \in \mathbb{Z}_{1008}^N$.}

\BlankLine
\textbf{(Pre-processing)} \\

\textbf{Compute} $p \leftarrow \mathrm{lcm}(p_1,\ldots,p_m)$\;

\textbf{Factorize} $p=\prod_{u=1}^n s_u^{t_u}$ and $p_\ell=\prod_{u=1}^n s_u^{t_{u,\ell}}$ for all $\ell\in[m]$\;

\For{$u=1$ \KwTo $n$}{
    \textbf{Sort} $\{t_{u,\ell}\}_{\ell=1}^m$ as $t_{u,(1)}\le \cdots \le t_{u,(m)}=t_u$\;

    \textbf{Set} $t_{u,(0)}=0$\;
    
    \For{$h=1$ \KwTo $m$}{
        \textbf{Set} $\delta_{u,h}\leftarrow t_{u,(h)}-t_{u,(h-1)}$\;
    
        \textbf{Set} $k_{u,h}\leftarrow s_u^{\delta_{u,h}}$\;
    }
}

\For{$\ell=1$ \KwTo $m$}{
\textbf{Set} \(\mathcal J_\ell \leftarrow \{(u,h)\mid u\in[n], h\in[m], t_{u,(h)}\le t_{u,\ell}\}\);
}

\BlankLine
\textbf{(Sub-problem 1: Mod-$p$ Assignment)} \\
\textbf{Initialize} $\bm{r}^{(u,h)} \leftarrow \bm{0}$ for all $(u,h)$\;

\Repeat{No improving block update}{
    \For{$u = 1$ \KwTo $n$}{
        \For{$h = 1$ \KwTo $m$}{
            \textbf{Update} $\bm{r}^{(u,h)}$ by solving Problem (\ref{eq:block-subproblem}) if the  objective decreases\;
        }
    }   
}

\textbf{Recover} $\bm{r} = \mathrm{CRT}\!\left( \sum_{h=1}^m \bm{r}^{(u,h)} \cdot s_u^{t_{u, (h-1)}} \right)_{u=1}^n$\;

\BlankLine
\textbf{(Sub-problem 2: Quotient Assignment)} \\
\textbf{Construct} restricted edge set $\mathcal{E}^\prime = \{(i,j)\in\mathcal{E}\mid r_i=r_j\}$\;

\textbf{Apply} GGC on $(\mathcal{V}, \mathcal{E}^\prime)$ to obtain quotient assignment $\bm{q}$\;

\If{some $pq_i + r_i > 1007$}{
    \textbf{Reassign} quotients of violating nodes using heuristic adjustment (process in descending order of degree)\;
}

\BlankLine
\textbf{Output} $\textbf{PCI} = p\bm{q} + \bm{r}$\;

\caption{Blockwise Alternating Minimization with Graph Coloring for PCI Assignment}
\label{alg:blockwise}
\end{algorithm}
\section{Neural Block Solver for Min-$k$-Partition \label{sec:ros}}

The multi-modular decomposition reduces the modular assignment stage to a sequence of block subproblems~\eqref{eq:block-subproblem}. 
For a block $\bm r^{(u,h)}\in\mathbb Z_{s_u^{\delta_{u,h}}}^N$, fixing all other blocks yields a weighted Min-$k$-Partition problem with $k=s_u^{\delta_{u,h}}$. 
This section develops an instance-wise GNN-parametrized solver for these block subproblems. We refer to the resulting neural block solver as GNN-Parametrized Optimization (GPO).

\subsection{Effective Block Graph Construction}

For each block $(u,h)$, we construct an effective weighted graph over $N$ nodes, with edge weights capturing all relevant interference contributions conditioned on other block assignments:
\begin{align*}
&\widetilde{\bm{W}}_{i,j}^{\rm{inter}} \\ &\quad = \sum_{\ell:(u,h)\in\mathcal{J}_\ell} \lambda_\ell \bm{W}_{i,j}
\cdot\prod_{\substack{(u^\prime,h^\prime)\in\mathcal{J}_\ell \\ (u^\prime,h^\prime)\ne (u,h)}} \mathbbm{1}\{ r_i^{(u^\prime,h^\prime)} = r_j^{(u^\prime,h^\prime)} \},
\end{align*}
where the product masks out node pairs whose modular equality has already been broken by the fixed active blocks.
The corresponding block problem is thus given by
\begin{equation}
\begin{aligned}
\min_{\bm{r}^{(u,h)}} &\quad \sum_{i=1}^N\sum_{j=1}^N \widetilde{\bm{W}}_{i,j}^{\rm{inter}} \, \mathbbm{1}\{ r_i^{(u,h)} = r_j^{(u,h)}\}, \\
\text{s.t. } & \quad \bm{r}^{(u,h)}\in \mathbb{Z}_k^N, 
\end{aligned}\label{eq:min-k-partition-block}
\end{equation}
with $k=s_u^{\delta_{u,h}}$.

For notational simplicity, we denote the effective block weight matrix $\widetilde{\bm{W}}^{\rm{inter}}$ by a general $\bm W$ and encode $\bm r^{(u,h)}$ as a one-hot matrix $\bm X\in\{0,1\}^{N\times k}$. 
Then~\eqref{eq:min-k-partition-block} is rewritten as a quadratic objective over one-hot vectors:
\begin{align}
\min_{\bm X} \quad &  \mathrm{Tr}(\bm X^\top \bm W\bm X), \label{eq:onehot-block}\\
\mathrm{s.t.}\quad 
& \bm X\in\mathcal X \coloneq \{\bm e_1,\ldots,\bm e_k\}^N .
\notag
\end{align}

\subsection{GNN-Parametrized Relaxed Optimization \label{sec:GNN_optimization}}

Given the challenges associated with solving the discrete problem \eqref{eq:onehot-block}, we take the natural relaxation approach of replacing $\mathcal{X}$ by its convex hull, which is the Cartesian product of $N$ $k$-dimensional probability simplices, denoted by $\Delta_k^N$. Consequently, the discrete problem \eqref{eq:onehot-block} is relaxed into the following continuous optimization form:
\begin{align}
\min_{\bm{X}}& \quad  \mathrm{Tr}(\bm{X}^\top \bm{W} \bm{X})\coloneq f(\bm{X}), \label{eq:soft-relaxation}\\
\text{s.t.} & \quad \bm{X}\in\Delta_k^N \subseteq [0,1]^{N\times k}.\notag
\end{align}

To solve this relaxed block problem~\eqref{eq:soft-relaxation}, we use a GNN as a graph-aware neural parametrization of the relaxed assignment. 
The Min-$k$-Partition block naturally aligns with a node classification task where each node is assigned one of $k$ labels. 
Moreover, the gradient of the quadratic objective satisfies $\nabla f(\bm X)=2\bm W\bm X$, which has the form of weighted neighborhood aggregation. 
This matches the message-passing structure of GNNs and motivates representing the relaxed assignment through a graph-structured neural model.

Specifically, each node $i$ is initialized with a trainable embedding $\bm h_i^{(0)}$ with Xavier initialization~\cite{xavier2010understanding}. 
We adopt an $L$-layer GNN~\cite{morris2019weisfeiler} with the update
\begin{align}
\bm h_i^{(l)}
=
\sigma\left(
\bm\Phi_1^{(l)}\bm h_i^{(l-1)}
+
\bm\Phi_2^{(l)}
\sum_{j\in\mathcal N(i)}
W_{j,i}\bm h_j^{(l-1)}
\right)
\end{align}
for $l=1,\cdots, L$ with trainable parameters $\bm{\Phi}_1^{(l)}, \bm{\Phi}_2^{(l)}$ and activation function $\sigma$. 

This update combines node-wise transformations with weighted aggregation from neighboring nodes, allowing the parametrization to exploit the graph structure of the current block instance.
After the final row-wise softmax activation, we denote the resulting simplex-valued output by
\[
\bm H_{\bm\Phi}^{(L)}\in\Delta_k^N.
\]
Each row of the final embeddings $\bm{H}^{(L)}_{\bm{\Phi}}$ lies in $\Delta_k^N$, producing a feasible relaxed assignment $\bm{X} = \bm{H}^{(L)}_{\bm{\Phi}}$, where $\bm{\Phi}\coloneq(\bm{H}^{(0)}, \bm{\Phi}_1^{(1)}, \bm{\Phi}_2^{(1)}, \cdots, \bm{\Phi}_1^{(L)}, \bm{\Phi}_2^{(L)})$ denotes the trainable parameters of GNN.

When solving a testing instance $\bm{W}$, we optimize the trainable parameters from a fresh random initialization using the objective
\begin{align}
\min_{\bm{\Phi}}\quad \mathcal{L}(\bm{\Phi}) \coloneq \mathrm{Tr}\left(\bm{H}_{\bm{\Phi}}^{(L)\top}\bm{W}\bm{H}_{\bm{\Phi}}^{(L)}\right).\label{eq:gnn-parametrization}
\end{align}
The parameters are optimized separately for each weighted block instance, allowing the parametrization to adapt to the effective graph induced by the current modular decomposition.
After obtaining the instance-specific parameters $\bm{\Phi}^\star$, the relaxed solution is produced by a single forward evaluation:
\begin{align*}
\overline{\bm{X}}=\bm{H}_{\bm{\Phi}^\star}^{(L)}.
\end{align*}

\subsection{Conditional Expectation Rounding \label{sec:random_sampling}}

Let $\overline{\bm{X}}\in\Delta_k^N$ be a feasible solution to the relaxed problem (\ref{eq:soft-relaxation}). 
Our goal is to construct a discrete assignment $\bm{X} \in \mathcal{X}$ whose objective value is no larger than $f(\overline{\bm X})$.

We first introduce an auxiliary random matrix $\bm{Z}$ taking values in $\mathcal{X}$, where each row is independently distributed as
\begin{align}
\mathbb P(\bm{Z}_{i\cdot}=\bm e_s)=\overline X_{i,s},
\qquad i\in[N],\ s\in[k].
\label{eq:aux-random-assignment}
\end{align}
Since $\bm W$ has zero diagonal, this product distribution satisfies
\begin{align}
\mathbb E[f(\bm{Z})]=f(\overline{\bm X}).
\label{eq:unbiased-rounding}
\end{align}
Thus, the relaxed objective equals the expected objective of the randomized discrete assignment~\cite{qiu2024ros}. 
However, drawing realizations from this randomized rounding distribution only gives this guarantee in expectation and may require many trials to obtain a realization with objective no larger than $f(\overline{\bm X})$.

We therefore derandomize this procedure by conditional expectation. 
At step $i$,  suppose the first $i-1$ rows have been fixed as 
$\bm X_{1\cdot},\ldots,\bm X_{(i-1)\cdot}$. 
Define the conditioning event
\begin{align}
\mathcal C_{i-1}
:=
\{\bm{Z}_{a\cdot}=\bm X_{a\cdot},\ a=1,\ldots,i-1\}.
\label{eq:conditioning-event}
\end{align}
For each candidate label $s\in[k]$, define
\begin{align}
\psi_i(s)
:=
\mathbb E\!\left[
f(\bm{Z})
\,\middle|\,
\mathcal C_{i-1},\ \bm{Z}_{i\cdot}=\bm e_s
\right].
\label{eq:conditional-cost}
\end{align}
The algorithm fixes node $i$ to the label minimizing $\psi_i(s)$. The description is given in Algorithm~\ref{alg:conditional}. 
Below we show that this method always leads to a solution $\bm{X}$ no worse than $\overline{\bm{X}}$.

\begin{algorithm}[t]
\caption{Conditional Expectation Rounding}
\label{alg:conditional}
\begin{algorithmic}[1]
\STATE \textbf{Input:} Relaxed solution $\overline{\bm X}\in\Delta_k^N$\;
\STATE \textbf{Initialize:} $\bm X\leftarrow \bm 0_{N\times k}$\;
\STATE Define the auxiliary random matrix $\bm{Z}$ by~\eqref{eq:aux-random-assignment}\;
\FOR{$i=1$ to $N$}
    \STATE \textbf{Set} $\mathcal C_{i-1}$ by~\eqref{eq:conditioning-event}\;
    \STATE \textbf{Set} $s_i^\star\leftarrow \arg\min_{s\in[k]}\psi_i(s)$, where $\psi_i(s)$ is defined in~\eqref{eq:conditional-cost}\;
    \STATE \textbf{Set} $\bm X_{i\cdot}\leftarrow \bm e_{s_i^\star}$\;
\ENDFOR
\STATE \textbf{Output:} $\bm X\in\mathcal X$\;
\end{algorithmic}
\end{algorithm}

\begin{theorem}
Let $\overline{\bm{X}}$ and $\bm{X}$ be the input and output of Algorithm~\ref{alg:conditional} respectively. Then 
\begin{align}
    f(\bm{X}) \leq f(\overline{\bm{X}}).
\end{align}
\label{thm:conditional}
\end{theorem}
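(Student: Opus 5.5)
The plan is the standard method of conditional expectations. I will track the sequence of conditional expectations
\[
\Psi_i := \mathbb E\!\left[f(\bm Z)\,\middle|\,\mathcal C_i\right],\qquad i=0,1,\ldots,N,
\]
where $\mathcal C_0$ is the trivial event, so that $\Psi_0=\mathbb E[f(\bm Z)]$. Since $\bm X$ is fully determined after step $N$, we also have $\Psi_N=f(\bm X)$. The unbiasedness identity \eqref{eq:unbiased-rounding} gives $\Psi_0=f(\overline{\bm X})$; it holds because the rows of $\bm Z$ are independent and $\bm W$ has zero diagonal, so each cross term $\mathbb E[\bm Z_{i\cdot}\bm Z_{j\cdot}^\top]$ with $i\neq j$ factorizes into $\overline{\bm X}_{i\cdot}\overline{\bm X}_{j\cdot}^\top$. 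The theorem then follows once I show that $\Psi_i\le\Psi_{i-1}$ for every $i\in[N]$.

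For the monotonicity step, fix $i$ and condition on $\mathcal C_{i-1}$. Row $\bm Z_{i\cdot}$ is independent of the rows already fixed, so its conditional law is still $\mathbb P(\bm Z_{i\cdot}=\bm e_s)=\overline X_{i,s}$. The tower property then gives
\[
\Psi_{i-1}=\sum_{s=1}^k \overline X_{i,s}\,\psi_i(s).
\]
This is a convex combination, since $\overline{\bm X}_{i\cdot}\in\Delta_k$. Hence $\Psi_{i-1}\ge\min_s\psi_i(s)=\psi_i(s_i^\star)$. The right-hand side equals $\Psi_i$, because $\mathcal C_i=\mathcal C_{i-1}\cap\{\bm Z_{i\cdot}=\bm e_{s_i^\star}\}$. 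Chaining these inequalities gives $f(\bm X)=\Psi_N\le\Psi_0=f(\overline{\bm X})$.

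The main point needing care is that each $\psi_i(s)$ is well defined and that conditioning does not alter the law of the remaining rows. This matters for labels with $\overline X_{i,s}=0$, where the event $\{\bm Z_{i\cdot}=\bm e_s\}$ has probability zero. I would handle it by defining the conditional expectations explicitly: substitute the fixed rows $\bm X_{1\cdot},\ldots,\bm X_{i-1\cdot}$ and $\bm e_s$ into $f$, and replace each unfixed row $a>i$ by its mean $\overline{\bm X}_{a\cdot}$. This is legitimate because $f$ is multilinear in distinct rows (zero diagonal) and the rows are independent. With this closed form the convex-combination identity is a direct algebraic check.

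This closed form also shows that $\psi_i(s)$ is computable as a linear function of $s$ through $(\bm W\widetilde{\bm X})_{i,s}$, where $\widetilde{\bm X}$ is the current partially fixed matrix. The algorithm is therefore implementable, although that is not needed for the inequality itself.
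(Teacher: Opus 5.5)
Your proposal is correct and follows essentially the same route as the paper: it uses unbiasedness of the independent row-wise rounding (via the zero diagonal of $\bm W$), then the law of total expectation to write $\mathbb E[f(\bm Z)\mid\mathcal C_{i-1}]=\sum_{s}\overline X_{i,s}\psi_i(s)\ge\psi_i(s_i^\star)$, chained over $i=1,\ldots,N$. Your closed-form definition of $\psi_i(s)$ via multilinearity in distinct rows is a useful addition, since it keeps the argument valid when the algorithm picks a label with $\overline X_{i,s}=0$, a zero-probability conditioning event that the paper's proof passes over without comment.
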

\begin{proof}
See Appendix~\ref{proof:thm:conditional}. 
\end{proof}

This conditional expectation rounding deterministically produces a discrete feasible solution whose objective value is guaranteed not to exceed that of the relaxed solution. Our method extends the conditional expectation method from Max-Cut~\cite{karalias2020erdos, prabhakar1988probabilistic} to weighted Min-$k$-Partition with simplex-relaxed variables. 
While the basic implementation of Algorithm~\ref{alg:conditional} evaluates conditional costs independently for each node, leading to a naive complexity of $\mathcal{O}(k^2N\vert \mathcal{E}\vert)$, we adopt an incremental update scheme that maintains edge-local expectations and updates them only when a node is fixed. 
This reduces the overall rounding complexity to $\mathcal{O}(k|\mathcal E|)$.
The complete incremental procedure and analysis are given in Appendix~\ref{app:conditional}.

\subsection{Edge-Based Refinement}

After conditional expectation rounding, we further apply a monotone local refinement step. The procedure is summarized in Algorithm~\ref{alg:refine}.
For each weighted edge $(i,j)$, the algorithm enumerates all label pairs $(c_1,c_2)\in[k]^2$ and accepts the reassignment 
\[
(\bm X_{i\cdot},\bm X_{j\cdot})
\leftarrow
(\bm e_{c_1},\bm e_{c_2})
\]
only if it decreases the objective. 
The procedure repeats until no improving reassignment is found or a time budget (e.g., $30$ seconds) is reached. 
Since every accepted move decreases the objective, this refinement preserves the guarantee obtained after rounding while further improving the final discrete solution.

\begin{algorithm}[t]
\caption{Edge-based Local Search Refinement}
\label{alg:refine}
\KwData{Initial labels $\bm{X}$ from Algorithm~\ref{alg:conditional} of GPO, edge set $\mathcal{E}$ with weights $\bm{W}$, number of partitions $k$}
\KwResult{Refined labels $\bm{X}$}

\Repeat{no improvement or time budget reached}{
    \For{each $(i,j,w)\in \mathcal{E}$}{
        \For{$c_1=1$ to $k$}{
            \For{$c_2=1$ to $k$}{
                Compute cost difference $\Delta f$ for $(\bm{X}_{i\cdot},\bm{X}_{j\cdot})\leftarrow (\bm{e}_{c_1},\bm{e}_{c_2})$\;
                \If{$\Delta f < 0$}{
                    Update $(\bm{X}_{i\cdot},\bm{X}_{j\cdot})\leftarrow (\bm{e}_{c_1},\bm{e}_{c_2})$\;
                }
            }
        }
    }
}
\KwRet{$\bm{X}$}
\end{algorithm}

\subsection{Summary}

We summarize the proposed GNN-parametrized optimization framework for solving each weighted Min-$k$-Partition block. 
The method consists of three steps:
\begin{enumerate}
    \item \textbf{Instance-wise neural optimization:} optimize the GNN parameters on the current weighted block graph by minimizing the objective~\eqref{eq:gnn-parametrization}.
    \item \textbf{Deterministic rounding:} convert the relaxed solution into a discrete assignment via conditional expectation rounding, which does not increase the objective value.
    \item \textbf{Edge-based refinement:} further improve the rounded assignment by a monotone local search over interference-relevant edges.
\end{enumerate}
The full procedure is outlined in Algorithm~\ref{alg:GNN_pipeline}.

\begin{algorithm}[t]
\caption{GNN-Parametrized Optimization for Min-$k$-Partition}
\label{alg:GNN_pipeline}
\KwData{Weighted graph matrix $\bm{W}$ and number of partitions $k$.}
\KwResult{Discrete assignment $\bm{X}\in\mathcal{X}$.}

\BlankLine
\textbf{Initialize} GNN parameters $\bm{\Phi}$\;
\Repeat{convergence}{
    \textbf{Update} embeddings $\bm{H}^{(L)}_{\bm{\Phi}}$ using GNN layers\;
    \textbf{Compute loss} $\mathcal{L}(\bm{\Phi}) \gets \mathrm{Tr}(\bm{H}^{(L)\top}_{\bm{\Phi}}\bm{W}\bm{H}^{(L)}_{\bm{\Phi}})$\;
    \textbf{Optimize} $\bm{\Phi}$ via gradient-based training\;
}
\textbf{Obtain} the optimized parameter $\bm{\Phi}^\star$\;
\textbf{Compute} relaxed solution $\overline{\bm{X}} \gets \bm{H}^{(L)}_{\bm{\Phi}^\star}$\; 
\textbf{Apply} Algorithm~\ref{alg:conditional} to obtain $\bm{X} \in \mathcal{X}$\;
\textbf{Apply} Algorithm~\ref{alg:refine} to refine solution $\bm{X} \in\mathcal{X}$\;
\BlankLine
\textbf{Output:} $\bm{X}$
\end{algorithm}

Compared with the PMD approach~\cite{qiu2025relaxation}, our framework differs in both relaxation and optimization. 
PMD reformulates Min-$k$-Partition as a quadratic program with norm-equality constraints and enforces feasibility through a penalty-based iterative procedure, which requires an outer loop over the penalty parameter. 
In contrast, we directly optimize the penalty-free simplex relaxation~\eqref{eq:soft-relaxation} through a GNN-based neural parametrization. 
The resulting relaxed solution is then converted into a discrete assignment by conditional expectation rounding, which deterministically guarantees that the rounded objective value is no larger than the relaxed objective value. 

By avoiding penalty tuning and introducing a graph-aware parametrization of the relaxed assignment, the proposed solver provides a simpler alternative for large-scale block optimization. 
Furthermore, unlike the naive pairwise local search used in~\cite{qiu2025relaxation}, our refinement is edge-restricted and only examines interference-relevant pairs. 
Since each accepted move decreases the objective, this refinement preserves monotonic improvement while substantially reducing the local-search overhead.

\section{Numerical Experiments \label{sec:exp}}

\begin{figure*}
    \centering
    \includegraphics[width=\linewidth]{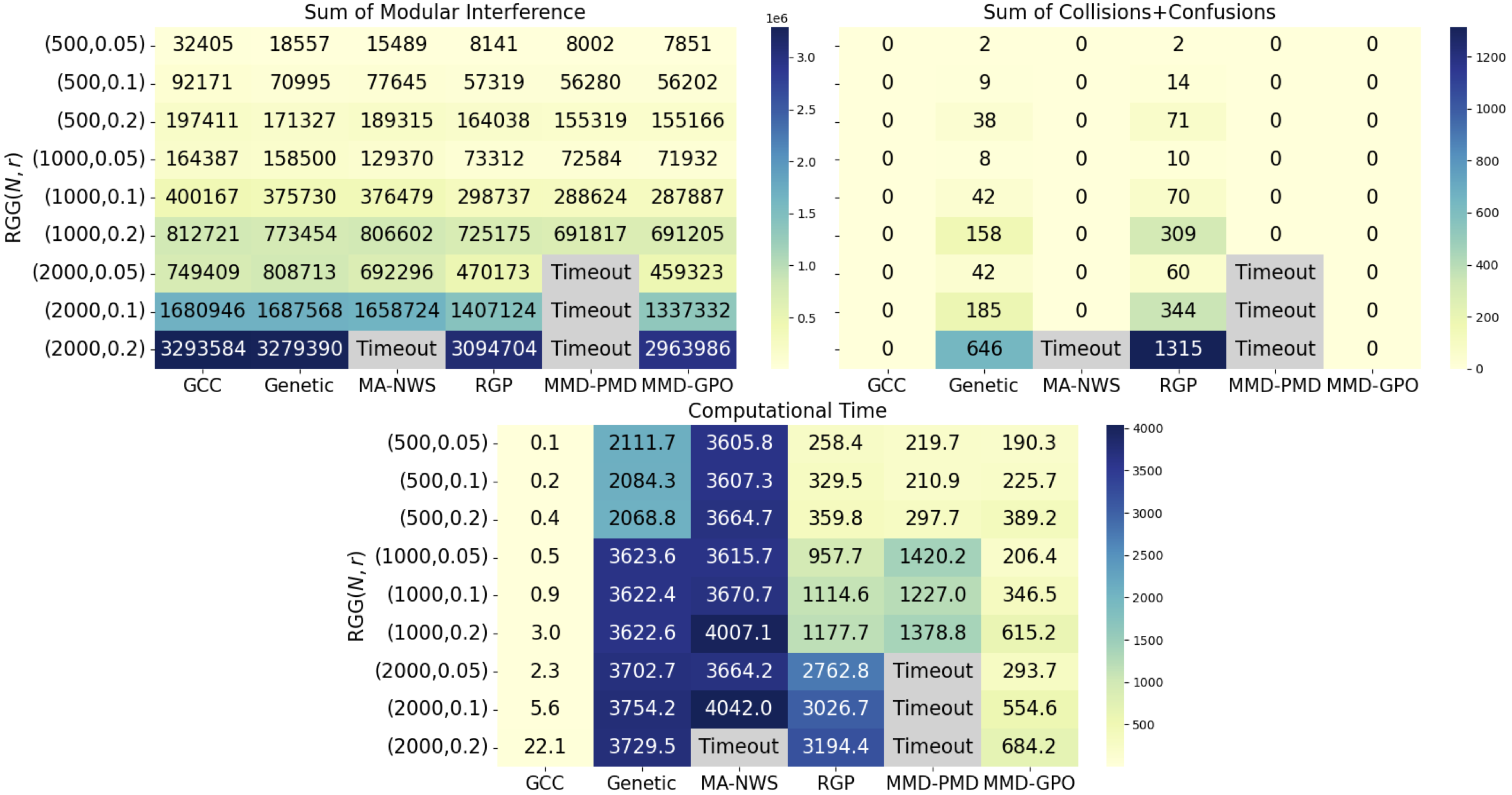}
    \caption{The Evaluation Results on $\mathrm{RGG}(N, r)$ with $N\in\{500, 1000, 2000\}$ and $r\in\{0.05, 0.1, 0.2\}$ of different scales.}
    \label{fig:exp_results}
\end{figure*}

In this section, we evaluate the performance and efficiency of the proposed framework for large-scale PCI assignment. We compare it against state-of-the-art baselines in jointly optimizing mod-$3$, mod-$4$, mod-$6$, and mod-$30$ interference listed in Table~\ref{tab:common-k-extended}, while eliminating collisions and confusions. All four interference types are treated equally, indicating that $\lambda_1=\lambda_2=\lambda_3=\lambda_4=1$. Experiments are implemented in Python on a system equipped with an Intel i9-12900K CPU and an NVIDIA GeForce RTX 3090 GPU.

\subsection{List of Baseline Methods}

We call our proposed method of Multi-Modular Decomposition with GNN Parametrization-based Optimization \texttt{MMD-GPO}. The baseline methods for comparison are described as follows. We select five representative methods for solving the PCI problem.

\textbf{Greedy Graph Coloring Method (\texttt{GGC})} \cite{bandh2009graph}: This method employs a greedy coloring algorithm to address collisions and confusions. In the simulation, we utilize the codes provided from \cite{coloring}.

\textbf{Genetic Method (\texttt{Genetic})}~\cite{panxing2016pci}: We apply a genetic method for comparison. The fitness function is set as the summation of all modular interferences. The hyperparameter settings include a population size of $30$ and $200$ iterations, and the probabilities of crossover, mutation, and selection are set to $0.77$, $0.3$, and $0.8$, respectively. 

\textbf{Memetic Algorithm (\texttt{MA-NWS})}~\cite{andrade2022physical}: We adopt the non-warm-start version as recommended by~\cite{andrade2022physical}, which relies solely on the core evolutionary mechanism of the Memetic Algorithm. The fitness function is set as the summation of all modular interferences. All hyperparameters are set according to the best-performing configuration reported in~\cite{andrade2015biased}.

\textbf{Relaxed Gradient Projection (\texttt{RGP})}~\cite{11149326}:  
We solve the continuous relaxation via gradient projection and apply the rounding rule therein. All parameters, including step sizes and tolerances, are set according to~\cite{11149326}.

\textbf{Multi-Modular Decomposition with Penalized Mirror Descent (\texttt{MMD-PMD})}~\cite{qiu2025relaxation}:  
To isolate the effect of the proposed neural block solver, we combine our congruence decomposition with the PMD block solver of~\cite{qiu2025relaxation}. All hyperparameters, including step sizes, penalty schedule, and iteration limits, are set according to the configurations reported in~\cite{qiu2025relaxation}.

\subsection{Synthetic Data Experiments}

In this section, we test our method on synthetically generated networks.

\subsubsection{Experiment Setup}

\begin{table}[t]
    \centering
    \caption{Structural statistics of the random geometric graphs used in our experiments (averaged over 20 instances per setting).}
    \resizebox{\columnwidth}{!}{%
    \begin{tabular}{ccccc}
        \midrule
        $\mathrm{RGG}(N, r)$ & Density & Max Clique & Avg. Degree & Clustering Coef. \\
        \midrule
        $(500,\ 0.05)$  & $0.01$ & $6.45$  & $3.78$   & $0.53$ \\
        $(500,\ 0.1)$   & $0.03$ & $13.65$ & $14.48$  & $0.62$ \\
        $(500,\ 0.2)$   & $0.11$ & $30.45$ & $52.69$  & $0.65$ \\
        $(1000,\ 0.05)$ & $0.01$ & $9.70$  & $7.54$   & $0.60$ \\
        $(1000,\ 0.1)$  & $0.03$ & $20.45$ & $28.85$  & $0.62$ \\
        $(1000,\ 0.2)$  & $0.11$ & $53.05$ & $105.29$ & $0.65$ \\
        $(2000,\ 0.05)$ & $0.01$ & $14.90$ & $15.09$  & $0.60$ \\
        $(2000,\ 0.1)$  & $0.03$ & $33.75$ & $57.74$  & $0.62$ \\
        $(2000,\ 0.2)$  & $0.11$ & $95.15$ & $210.51$ & $0.65$ \\
        \midrule
    \end{tabular}
    }
    \label{tab:rgg_stats}
\end{table}

Following~\cite{qiu2025relaxation}, we generate synthetic networks using the random geometric graph model $\mathrm{RGG}(N, r)$, where $N$ denotes the number of cells and $r$ is called the connection radius. Cells are uniformly distributed in a 2D unit square, and cell $j$ is treated as a neighbor of node $i$ if their Euclidean distance is at most $r$. The interference matrix $\bm{W}$ is defined by setting the interference inversely proportional to the Euclidean distance, simulating spatial interference in wireless networks. This model is widely used to emulate realistic wireless topologies~\cite{Gowaikar2006, Haenggi2009, qiu2025relaxation}.

To evaluate algorithm performance under diverse topological conditions, we systematically vary both the network size and spatial density. Specifically, we consider network sizes $N \in \{500, 1000, 2000\}$, and connection radii $r \in \{0.05, 0.1, 0.2\}$ which correspond to sparse, medium, and dense graphs, respectively. For each configuration $\mathrm{RGG}(N, r)$, we generate $20$ independent instances, which yields a total of $180$ graph instances. For each type of graph, we compute standard structural metrics, including edge density, maximum clique size, average degree, and clustering coefficient. These statistics are summarized in Table \ref{tab:rgg_stats} and allow us to interpret algorithm performance under varying structural complexities. 
All methods are run with a $3600$s time limit. If an algorithm has not terminated by then, we record its intermediate solution if available; otherwise, it is marked as ``Timeout''.

\texttt{GPO} is designed as a two-layer GNN, with both the input and hidden dimensions set to $5\times k$ for solving Min-$k$-Partition. To address the issue of gradient vanishing, we apply graph normalization as proposed by \cite{cai2021graphnorm}.
The \texttt{GPO} model is optimized using Adam with a learning rate of $10^{-2}$ for a maximum epoch number of $10^{5}$, applying early stopping with a tolerance of $10^{-2}$ and patience of $100$ iterations. Optimization terminates if no improvement is observed.

\subsubsection{Experiment Results} 
We list the performance of five baselines and our proposed method under three key metrics: (i) the \emph{sum of modular interference} (including mod-$3$, mod-$4$, mod-$6$, and mod-$30$ interferences), (ii) the \emph{sum of collisions and confusions}, and (iii) the \emph{computational time}. Note that since GPO optimizes an instance-specific GNN parametrization for each test graph, its per-instance optimization cost is fully included in the reported runtime. The averaged results over $20$ instances are summarized in Fig.~\ref{fig:exp_results}. 

\textbf{Interference Minimization.}  
As shown in the top-left heatmap, both \texttt{MMD-PMD} and \texttt{MMD-GPO} achieve substantially lower modular interference compared with all other baselines, which validates the effectiveness of the proposed MMD in dealing with this multi-objective challenge. Furthermore, \texttt{MMD-GPO} consistently outperforms \texttt{MMD-PMD}, demonstrating that GPO provides an additional gain over the plain PMD strategy by further refining the interference reduction.  

\textbf{Collision and Confusion Reduction.}  
The top-right heatmap shows the total number of collisions and confusions. Both \texttt{MMD-PMD} and \texttt{MMD-GPO} completely eliminate such conflicts in all settings, confirming that the MMD framework guarantees collision- and confusion-free assignments in these cases. Although \texttt{GCC} also returns zero collisions and confusions, its solution ignores modular interference altogether and is therefore not practically applicable. In contrast, the remaining heuristics suffer from persistent residual conflicts, especially as the network size increases.  

\textbf{Computational Efficiency.}  
The bottom heatmap illustrates the average computational time. 
MMD-GPO exhibits a clear scalability advantage as the network size increases. For $N\ge 1000$, it is substantially faster than MMD-PMD across all completed settings, while MMD-PMD times out on the largest instances, highlighting the scalability advantage of GPO. 
Although GGC has the shortest runtime, it only addresses collisions and confusions without optimizing modular interference.
Overall, \texttt{MMD-GPO} provides the best balance between efficiency and performance. 

\textbf{Summary.}  
In summary, the MMD framework is key to eliminating collisions and confusions, while the GPO significantly boosts both performance and efficiency in optimizing multi-modular interference. As a result, \texttt{MMD-GPO} consistently outperforms all modular-interference-aware baselines, demonstrating the robustness and practicality of our approach for large-scale PCI assignment.

\subsection{Real-World Data Experiments}

\subsubsection{Experimental Setup}

We conduct experiments on real measurement-report (MR) data collected from a commercial downlink cellular network in Beijing, China. The dataset contains approximately $2{,}000$ reconfigurable cells deployed in a dense urban area. The MRs provide neighbor relations, frequency-domain co-occurrence statistics, and interaction strengths among cells. Following the preprocessing and graph-construction pipeline established in \cite{qiu2025relaxation}, the MR data is converted into the collision set $\mathcal{E}_1$, the confusion set $\mathcal{E}_2$, and the interference matrix $\bm{W}^{\mathrm{inter}}$. 

\subsubsection{Experimental Results}

Table~\ref{tab:res} reports the performance of all methods on the real-world dataset containing $2{,}000$ reconfigurable cells. We evaluate each method using three metrics:
(i) the total number of collisions and confusions;
(ii) the overall modular interference value; and
(iii) the computational time.

Among the traditional baselines, \texttt{GGC} removes all collisions and confusions but yields the highest interference due to fully discarding modular interference. \texttt{Genetic}, \texttt{MA-NWS}, and \texttt{RGP} reduce interference to some extent, yet none of them attains collision-free and confusion-free assignments, and all incur substantially higher runtime.

Built on the \texttt{MMD} framework, both \texttt{MMD-PMD} and the proposed \texttt{MMD-GPO} generate solutions that strictly satisfy collision-free and confusion-free constraints while achieving considerably lower interference than all classical baselines. Moreover, \texttt{MMD-GPO} reaches the lowest interference overall, improving upon \texttt{MMD-PMD} by $2.8\%$ and outperforming the other modular-interference-aware baselines by $3.4\%\sim 53\%$. It also delivers the best computational efficiency among methods that explicitly optimize modular interference, running about $2\times$ faster than \texttt{MMD-PMD} and significantly faster than other modular-interference-aware baselines. 

In summary, \texttt{MMD-GPO} achieves the most favorable balance between interference minimization and runtime efficiency, while rigorously preserving collision-free and confusion-free feasibility.

\begin{table}[t]
\centering
\resizebox{\linewidth}{!}{
\begin{threeparttable}
\caption{The evaluation results on the real-world data. }
\begin{tabular}{c|ccc}
\hline
Methods                                               & Coll. \& Conf. & Inter. & Time (s)         \\ \hline
\texttt{GGC} \cite{coloring, bandh2009graph} & \textbf{0}                       & 122392610                   & 3.20             \\ \hline
\texttt{Genetic} \cite{panxing2016pci}       & 226                              & 111035257                   & 3649.26          \\
\texttt{MA-NWS} \cite{andrade2022physical}   & 128                              & 62123830                    & 7520.81          \\
\texttt{RGP} \cite{11149326}           & 372                              & 53833127                    & 2423.40          \\ \hline
\texttt{MMD-PMD} \cite{qiu2025relaxation}    & \textbf{0}                       & 53505001                    & 2969.44          \\
\texttt{MMD-GPO}   & \textbf{0}                       & \textbf{51999927}           & \textbf{1452.26} \\ \hline
\end{tabular}
\label{tab:res}
\end{threeparttable}
}
\end{table}

\section{Conclusion \label{sec:con}}

This paper developed a congruence decomposition framework with neural block solvers for large-scale PCI assignment with multiple modular interference objectives. By exploiting the arithmetic structure of modular congruence relations, the proposed decomposition framework transforms the modular assignment stage into a sequence of blockwise Min-\(k\)-Partition subproblems and separates it from collision and confusion handling through graph coloring. The resulting block-coordinate procedure admits finite-step termination under monotone block updates. To improve the scalability of the resulting NP-hard subproblems, we further developed an instance-wise GNN-parametrized block solver with deterministic conditional expectation rounding and edge-based refinement. Experiments on synthetic cellular graphs and real-world 5G network data demonstrate substantial reductions in modular interference and improved computational scalability while maintaining collision- and confusion-free solutions in the evaluated instances. Future work includes developing a deeper theoretical understanding of neural-parametrized block solvers and exploring whether the proposed congruence decomposition can be extended beyond PCI assignment.

\appendices

\section{Proof of Theorem \ref{thm:modular-block-equivalence} \label{proof:lamme:modular-block-equivalence}}

\begin{proof}

    From the CRT, the congruence 
    \begin{align*}
        r_i\equiv r_j \pmod{p_\ell}
    \end{align*}
    holds if and only if, for every prime factor $s_u$, 
    \begin{align*}
        r_i\equiv r_j \pmod{s_u^{t_{u,\ell}}}.
    \end{align*}
    Fix $u\in[n]$ and write the $s_u$-adic expansion with sorted exponents $t_{u,(1)}\le\cdots\le t_{u,(m)}=t_u$: 
    \begin{align*}
        r_i^{(u)} = r_i \bmod s_u^{t_u}=\sum_{h=1}^mr_i^{(u,h)}s_u^{t_{u,(h-1)}}, \\
        r_j^{(u)} = r_j \bmod s_u^{t_u}=\sum_{h=1}^mr_j^{(u,h)}s_u^{t_{u,(h-1)}}, 
    \end{align*}
    where $t_{u,(0)}\coloneq 0$ and each coefficient $r_i^{(u,h)}$ lies in $\mathbb{Z}_{s_u^{\delta_{u,h}}}$ with $\delta_{u,h}=t_{u,(h)}-t_{u,(h-1)}$. 

    Then, the congruence $r_i\equiv r_j \pmod{s_u^{t_{u,\ell}}}$ is equivalent to equality of the two expansions modulo $s_u^{t_{u,\ell}}$, hence to equality of all base-$s_u$ digits whose positional weights are strictly smaller than $s_u^{t_{u,\ell}}$. Concretely,
    \begin{align*}
        &r_i^{(u)}\equiv r_j^{(u)}\pmod{s_u^{t_{u,\ell}}} \\
        \Leftrightarrow&
        \sum_{h:t_{u,(h)}\le t_{u,\ell}} r_i^{(u,h)}s_u^{t_{u,(h-1)}}
        =
        \sum_{h:\;t_{u,(h)}\le t_{u,\ell}} r_j^{(u,h)}\,s_u^{\,t_{u,(h-1)}}.
    \end{align*}

    By the uniqueness of the $s_u$-adic expansion at these positional weights, the last equality holds if and only if
    \begin{align*}
    r_i^{(u,h)}=r_j^{(u,h)}\quad\text{for every }h\text{ with }t_{u,(h)}\le t_{u,\ell}.
    \end{align*}
    Collecting this condition over all prime factors $s_u$ yields exactly the requirement that $r_i^{(u,h)}=r_j^{(u,h)}$ for every pair $(u,h)\in\mathcal{J}_\ell$. This proves the equivalence.

\end{proof}

\section{Proof of Theorem~\ref{thm:convergence_BCD} \label{proof:thm:convergence_BCD}}

\begin{proof}
Let 
\begin{align*}
\Lambda(\{\bm r^{(u,h)}\})=\sum_{\ell=1}^m\lambda_\ell\sum_{i,j}W^{\mathrm{inter}}_{i,j}\prod_{(u,h)\in\mathcal J_\ell}\mathbbm{1}\{r_i^{(u,h)}=r_j^{(u,h)}\}
\end{align*}
be the weighted sum of modular interferences. Since every block update satisfies $\Lambda_{\rm new}<\Lambda_{\rm old}$, the sequence $\{\Lambda^{(t)}\}$ of objective values produced by the algorithm is monotonically decreasing. Since the overall feasible set is finite, $\Lambda$ can assume only finitely many values. Therefore, the decreasing sequence $\{\Lambda^{(t)}\}$ must stabilize after a finite number of updates, i.e., the algorithm reaches a configuration at which no further accepted block update is possible. This implies termination in a finite number of block updates.
\end{proof}

\section{Proof of Theorem \ref{thm:conditional} \label{proof:thm:conditional}}

\begin{proof}
    
Let $\overline{\bm{X}} \in \Delta_k^N$ be a feasible solution to the relaxed problem (\ref{eq:soft-relaxation}), and let $\bm{X}$ be the output of Algorithm~\ref{alg:conditional}.

Define an auxiliary random matrix $\bm Z\in\mathcal X$ by independently drawing each row according to
\[
\mathbb P(\bm Z_{i\cdot}=\bm e_s)=\overline X_{i,s},
\qquad i\in[N],\ s\in[k].
\]
Since $\bm W$ has zero diagonal, the randomized rounding is unbiased~\cite{qiu2024ros}:
\begin{align}
\mathbb E[f(\bm Z)] = f(\overline{\bm X}).
\label{eq:proof-unbiased-rounding}
\end{align}

Consider the sequential conditional expectation procedure. 
Suppose that before fixing node $i$, the first $i-1$ rows have been fixed as
$\bm X_{1\cdot},\ldots,\bm X_{(i-1)\cdot}$. 
Let
\[
\mathcal C_{i-1}
=
\{\bm Z_{\ell\cdot}=\bm X_{\ell\cdot},\ \ell=1,\ldots,i-1\}.
\]
For each candidate label $s\in[k]$, define
\begin{align}
\psi_i(s)
:=
\mathbb E\!\left[
f(\bm Z)
\,\middle|\,
\mathcal C_{i-1},\ \bm Z_{i\cdot}=\bm e_s
\right].
\end{align}
Algorithm~\ref{alg:conditional} selects
\[
s_i^\star\in\arg\min_{s\in[k]}\psi_i(s)
\]
and sets $\bm X_{i\cdot}=\bm e_{s_i^\star}$.

By the law of total expectation,
\begin{align}
\mathbb E\!\left[f(\bm Z)\mid \mathcal C_{i-1}\right]
&=
\sum_{s=1}^k \overline X_{i,s}\psi_i(s) \notag\\
&\ge
\min_{s\in[k]}\psi_i(s)
=
\psi_i(s_i^\star).
\end{align}
Thus, fixing the $i$-th row according to Algorithm~\ref{alg:conditional} does not increase the conditional expected objective.

Applying this argument sequentially for $i=1,\ldots,N$, we obtain
\begin{align}
f(\bm X)
\le
\mathbb E[f(\bm Z)]
=
f(\overline{\bm X}),
\end{align}
where the equality follows from~\eqref{eq:proof-unbiased-rounding}. 
Therefore, Algorithm~\ref{alg:conditional} returns a discrete assignment whose objective value is no larger than that of the relaxed solution.

\end{proof}

\section{Efficient Implementation of Algorithm~\ref{alg:conditional} \label{app:conditional}}

Here we describe an efficient, incremental implementation of Algorithm~\ref{alg:conditional} that maps the relaxed solution $\overline{\bm{X}}\in\Delta_k^N$ to a discrete one-hot assignment $\bm{X}\in\mathcal{X}$. Denote the objective evaluated at $\overline{\bm{X}}$ by
\begin{align*}
f(\overline{\bm{X}})
    =\sum_{(i,j)\in\mathcal{E}} W_{i,j}\,\overline{\bm{X}}_{i\cdot}^{\top}\overline{\bm{X}}_{j\cdot}
    \coloneq \sum_{(i,j)\in\mathcal{E}} W_{i,j}T_{i,j},
\end{align*}
where $T_{i,j}=\overline{\bm{X}}_{i\cdot}^{\top}\overline{\bm{X}}_{j\cdot}$ is the expected contribution of edge $(i,j)$ under the product distribution induced by $\overline{\bm{X}}$. The rounding proceeds by fixing nodes sequentially: when considering node $i$, we evaluate, for each candidate label $s\in\{1,\dots,k\}$, the change in the conditional expected objective obtained by assigning $s$ to node $i$. Since only edges incident to $i$ are affected, it suffices to inspect $\mathcal{N}(i)=\{j\mid (i,j)\in\mathcal{E}\}$. For an incident edge $(i,j)$, the post-conditioning value $T_{i,j}^{(s)}$ is defined as
\begin{align}
T_{i,j}^{(s)}=
\begin{cases}
1, & \text{if node } j \text{ is already fixed to label } s,\\[2pt]
0, & \text{if node } j \text{ is already fixed to a label }\neq s,\\[2pt]
\overline{X}_{j,s}, & \text{if node } j \text{ is not yet fixed}.
\end{cases} \label{eq:incre}
\end{align}
Thus, the incremental change for candidate $s$ is
\begin{align*}
\Delta f_i(s)=\sum_{j\in\mathcal{N}(i)} W_{i,j}\bigl(T_{i,j}^{(s)}-T_{i,j}\bigr).
\end{align*}
We choose the label $s^\star=\arg\min_s \Delta f_i(s)$, set $\bm{X}_{i\cdot}$ to the corresponding one-hot vector, and update $T_{i,j}\leftarrow T_{i,j}^{(s^\star)}$ for all $j\in\mathcal{N}(i)$. This edge-local maintenance avoids recomputing the quadratic form at each step and yields the practical procedure summarized in Algorithm~\ref{appendix:incremental_rounding}.

\begin{algorithm}[t]
\caption{Incremental Conditional Rounding}
\label{appendix:incremental_rounding}
\KwIn{Relaxed solution $\overline{\bm{X}}\in\Delta_k^N$, edge set $\mathcal{E}$, weights $W_{i,j}$.}
\KwOut{Discrete assignment $\bm{X}\in\mathcal{X}$.}

\textbf{Initialize} $T_{i,j}\leftarrow \overline{\bm{X}}_{i\cdot}^{\top}\overline{\bm{X}}_{j\cdot}$ for all $(i,j)\in\mathcal{E}$\;

\For{$i=1$ \KwTo $N$}{
    \For{$s=1$ \KwTo $k$}{
        \textbf{Compute} $T_{i,j}^{(s)}$ for all $j\in\mathcal{N}(i)$ by \eqref{eq:incre}\;
        \textbf{Compute} $\Delta f_i(s)\leftarrow \sum_{j\in\mathcal{N}(i)} W_{i,j}\bigl(T_{i,j}^{(s)}-T_{i,j}\bigr)$\;
    }
    \textbf{Set} $s^\star\leftarrow\arg\min_s \Delta f_i(s)$\;
    \textbf{Set} $\bm{X}_{i\cdot}\leftarrow\bm{e}_{s^\star}$\;
    \textbf{Update} $T_{i,j}\leftarrow T_{i,j}^{(s^\star)}$ for all $j\in\mathcal{N}(i)$\;
}
\Return $\bm{X}$\;
\end{algorithm}

The initial computation of $f(\overline{\bm{X}})$ requires $O(k\vert\mathcal{E}\vert)$ time. During rounding, for each node $i$ and each label $s$, we inspect all incident edges, and the total number of such edge visits over the entire process is exactly $\vert\mathcal{E}\vert$. Hence, the overall time complexity is $O(k\vert\mathcal{E}\vert)$.

\bibliographystyle{IEEEtran}
\bibliography{Reference}

@InProceedings{xavier2010understanding,
  title = 	 {Understanding the difficulty of training deep feedforward neural networks},
  author = 	 {Glorot, Xavier and Bengio, Yoshua},
  booktitle = 	 {Proceedings of the Thirteenth International Conference on Artificial Intelligence and Statistics},
  pages = 	 {249--256},
  year = 	 {2010},
  editor = 	 {Teh, Yee Whye and Titterington, Mike},
  volume = 	 {9},
  series = 	 {Proceedings of Machine Learning Research},
  address = 	 {Chia Laguna Resort, Sardinia, Italy},
  month = 	 {13--15 May},
  publisher =    {PMLR},
}

@INPROCEEDINGS{11149326,
  author={Qiu, Yeqing and Xue, Ye and Jiang, Zhipeng and Shi, Qingjiang},
  booktitle={2025 IEEE/CIC International Conference on Communications in China (ICCC)}, 
  title={Relaxed Gradient Projection for {PCI} Assignment in {5G} Network}, 
  year={2025},
  volume={},
  number={},
  pages={1-6},
  doi={10.1109/ICCC65529.2025.11149326}}

@ARTICLE{9962800,
  author={Zhao, Zhongyuan and Verma, Gunjan and Rao, Chirag and Swami, Ananthram and Segarra, Santiago},
  journal={IEEE Transactions on Wireless Communications}, 
  title={Link Scheduling Using Graph Neural Networks}, 
  year={2023},
  volume={22},
  number={6},
  pages={3997-4012},
  doi={10.1109/TWC.2022.3222781}}

@article{schuetz2022combinatorial,
  title={Combinatorial optimization with physics-inspired graph neural networks},
  author={Schuetz, Martin JA and Brubaker, J Kyle and Katzgraber, Helmut G},
  journal={Nature Machine Intelligence},
  volume={4},
  number={4},
  pages={367--377},
  year={2022},
  publisher={Nature Publishing Group UK London}
}

@ARTICLE{Haenggi2009,
  author={Haenggi, Martin and Andrews, Jeffrey G. and Baccelli, Francois and Dousse, Olivier and Franceschetti, Massimo},
  journal={IEEE J. Sel. Areas Commun.},
  title={Stochastic geometry and random graphs for the analysis and design of wireless networks}, 
  year={2009},
  volume={27},
  number={7},
  pages={1029-1046},
  doi={10.1109/JSAC.2009.090902}}

@ARTICLE{Gowaikar2006,
  author={Gowaikar, R. and Hochwald, B. and Hassibi, B.},
  journal={IEEE Trans. Inf. Theory},
  title={Communication over a wireless network with random connections}, 
  year={2006},
  volume={52},
  number={7},
  pages={2857-2871},
  doi={10.1109/TIT.2006.876254}}

@article{andrade2015biased,
title = {A biased random-key genetic algorithm for wireless backhaul network design},
  journal={Appl. Soft Comput.},
volume = {33},
pages = {150-169},
year = {2015},
issn = {1568-4946},
doi = {https://doi.org/10.1016/j.asoc.2015.04.016},
author = {Carlos E. Andrade and Mauricio G.C. Resende and Weiyi Zhang and Rakesh K. Sinha and Kenneth C. Reichmann and Robert D. Doverspike and Flávio K. Miyazawa},
}

@article{acedo2015analysis,
  title={Analysis of the impact of {PCI} planning on downlink throughput performance in {LTE}},
  author={Acedo-Hern{\'a}ndez, Roc{\'\i}o and Toril, Matias and Luna-Ram{\'\i}rez, Salvador and de la Bandera, Isabel and Faour, N},
  journal={Computer Networks},
  volume={76},
  pages={42--54},
  year={2015},
  publisher={Elsevier}
}

@article{zeljkovic2022alpaca,
  title={{ALPACA}: A {PCI} Assignment Algorithm Taking Advantage of Weighted {ANR}},
  author={Zeljkovi{\'c}, Ensar and Gogos, Dimitris and Dox, Gerwin and Latr{\'e}, Steven and Marquez-Barja, Johann M},
  journal={Journal of Network and Systems Management},
  volume={30},
  number={2},
  pages={33},
  year={2022},
  publisher={Springer}
}

@misc{coloring,
  author       = {Guillem G. Subies},
  title        = {Welsh Powell Algorithm to color graphs },
  howpublished = {\url{https://gist.github.com/GuillemGSubies/59b968ec0a68c17384c9f15c15cb38e6}},
}

@article{prabhakar1988probabilistic,
title = {Probabilistic construction of deterministic algorithms: Approximating packing integer programs},
journal = {Journal of Computer and System Sciences},
volume = {37},
number = {2},
pages = {130-143},
year = {1988},
issn = {0022-0000},
doi = {https://doi.org/10.1016/0022-0000(88)90003-7},
url = {https://www.sciencedirect.com/science/article/pii/0022000088900037},
author = {Prabhakar Raghavan}
}

@article{fairbrother2018two,
  title={A two-level graph partitioning problem arising in mobile wireless communications},
  author={Fairbrother, Jamie and Letchford, Adam N and Briggs, Keith},
  journal={Comput. Optim. and Appl.},
  volume={69},
  pages={653--676},
  year={2018},
  publisher={Springer}
}

@inproceedings{shen2017novel,
  title={A novel {PCI} optimization method in {LTE} system based on intelligent genetic algorithm},
  author={Shen, Ao and Guo, Bao and Gao, Yan and Xie, Tao and Hu, Xiaochun and Zhang, Yang and Shen, Jinhu and Fang, Yuan and Wang, Guozhi and Liu, Yi},
  booktitle={Int. Conf. Signal and Inf. Process., Netw. And Comput.},
  pages={350--355},
  year={2017},
  organization={Springer}
}

@article{pratap2016randomized,
  title={Randomized graph coloring algorithm for physical cell {ID} assignment in {LTE}-a femtocellular networks},
  author={Pratap, Ajay and Misra, Rajiv and Gupta, Utkarsh},
  journal={Wireless Pers. Commun.},
  volume={91},
  pages={1213--1235},
  year={2016},
  publisher={Springer}
}

@inproceedings{bandh2009graph,
author = {Bandh, Tobias and Carle, Georg and Sanneck, Henning},
title = {Graph coloring based physical-cell-{ID} assignment for {LTE} networks},
year = {2009},
isbn = {9781605585697},
publisher = {Association for Computing Machinery},
doi = {10.1145/1582379.1582406},
booktitle={Proc. of the 2009 Int. Conf. Wireless Commun. and Mobile Comput.: Connecting the world wirelessly},
pages = {116–120},
numpages = {5},
location = {Leipzig, Germany},
series = {IWCMC '09}
}

@misc{3gpp38211,
  organization = "{3rd Generation Partnership Project (3GPP)}",
  title        = "{3GPP TS 38.211 V17.4.0: NR; Physical channels and modulation}",
  year         = "2024"
}

@Article{lodhi2023design,
AUTHOR = {Lodhi, Khalid and Chhillar, Jayant and Darak, Sumit J. and Sharma, Divisha},
TITLE = {Design and Performance Analysis of Hardware Realization of {3GPP} Physical Layer for {5G} Cell Search},
JOURNAL = {Chips},
VOLUME = {2},
YEAR = {2023},
NUMBER = {4},
PAGES = {223--242},
ISSN = {2674-0729},
DOI = {10.3390/chips2040014}
}

@ARTICLE{pierucci2015quality,
  author={Pierucci, Laura},
  journal={IEEE Wireless Commun.},
  title={The quality of experience perspective toward {5G} technology}, 
  year={2015},
  volume={22},
  number={4},
  pages={10-16},
  doi={10.1109/MWC.2015.7224722}}

@ARTICLE{liu2024,
  author={Liu, Ya-Feng and Chang, Tsung-Hui and Hong, Mingyi and Wu, Zheyu and Man-Cho So, Anthony and Jorswieck, Eduard A. and Yu, Wei},
  journal={IEEE J. Sel. Areas Commun.},
  title={A Survey of Recent Advances in Optimization Methods for Wireless Communications}, 
  year={2024},
  volume={42},
  number={11},
  pages={2992-3031},
  doi={10.1109/JSAC.2024.3443759}
}

@article{luo2023srcon,
  author={Luo, Zhi-Quan and Zheng, Xi and López-Pérez, David and Yan, Qi and Chen, Xin and Wang, Nanbin and Shi, Qingjiang and Chang, Tsung-Hui and Garcia-Rodriguez, Adrian},
  journal={IEEE Commun. Mag.},
  title={{SRCON}: A Data-Driven Network Performance Simulator for Real-World Wireless Networks}, 
  year={2023},
  volume={61},
  number={6},
  pages={96-102},
  doi={10.1109/MCOM.001.2200179}
}

@ARTICLE{xu2021survey,
  author={Xu, Yongjun and Gui, Guan and Gacanin, Haris and Adachi, Fumiyuki},
  journal={IEEE Communications Surveys \& Tutorials}, 
  title={A Survey on Resource Allocation for {5G} Heterogeneous Networks: Current Research, Future Trends, and Challenges}, 
  year={2021},
  volume={23},
  number={2},
  pages={668-695},
  doi={10.1109/COMST.2021.3059896}}

@article{luolou2015enhanced,
author = {Loulou, AlaaEddin and Renfors, Markku},
title = {Enhanced {OFDM} for fragmented spectrum use in {5G} systems},
journal = {Transactions on Emerging Telecommunications Technologies},
volume = {26},
number = {1},
pages = {31-45},
doi = {https://doi.org/10.1002/ett.2898},
year = {2015}
}

@ARTICLE{mansoor2017tutorial,
  author={Shafi, Mansoor and Molisch, Andreas F. and Smith, Peter J. and Haustein, Thomas and Zhu, Peiying and De Silva, Prasan and Tufvesson, Fredrik and Benjebbour, Anass and Wunder, Gerhard},
  journal={IEEE Journal on Selected Areas in Communications}, 
  title={{5G}: A Tutorial Overview of Standards, Trials, Challenges, Deployment, and Practice}, 
  year={2017},
  volume={35},
  number={6},
  pages={1201-1221},
  doi={10.1109/JSAC.2017.2692307}}

@ARTICLE{ge2016ultra,
  author={Ge, Xiaohu and Tu, Song and Mao, Guoqiang and Wang, Cheng-Xiang and Han, Tao},
  journal={IEEE Wireless Communications}, 
  title={{5G} Ultra-Dense Cellular Networks}, 
  year={2016},
  volume={23},
  number={1},
  pages={72-79},
  doi={10.1109/MWC.2016.7422408}
}

@InProceedings{qiu2024ros,
  title={{ROS}: A {GNN}-based Relax-Optimize-and-Sample Framework for Max-$k$-Cut Problems},
  author={Qiu, Yeqing and Xue, Ye and Wang, Akang and Wang, Yiheng and Shi, Qingjiang and Luo, Zhi-Quan},
  booktitle={Proceedings of the 42nd International Conference on Machine Learning},
  year={2025}
}

@ARTICLE{qiu2025relaxation,
  author={Qiu, Yeqing and Huang, Chengpiao and Xue, Ye and Jiang, Zhipeng and Shi, Qingjiang and Zhang, Dong and Luo, Zhi-Quan},
  journal={IEEE Trans. Signal Process.}, 
  title={Relaxation-Free Min-$k$-Partition for {PCI} Assignment in {5G} Networks}, 
  year={2025},
  volume={73},
  number={},
  pages={3931-3946},
  doi={10.1109/TSP.2025.3604409}}

@article{karalias2020erdos,
  title={Erdos goes neural: an unsupervised learning framework for combinatorial optimization on graphs},
  author={Karalias, Nikolaos and Loukas, Andreas},
  journal={Advances in Neural Information Processing Systems},
  volume={33},
  pages={6659--6672},
  year={2020}
}

@inproceedings{cai2021graphnorm,
  title={Graphnorm: A principled approach to accelerating graph neural network training},
  author={Cai, Tianle and Luo, Shengjie and Xu, Keyulu and He, Di and Liu, Tie-yan and Wang, Liwei},
  booktitle={International Conference on Machine Learning},
  pages={1204--1215},
  year={2021},
  organization={PMLR}
}

@inproceedings{morris2019weisfeiler,
  title={Weisfeiler and leman go neural: Higher-order graph neural networks},
  author={Morris, Christopher and Ritzert, Martin and Fey, Matthias and Hamilton, William L and Lenssen, Jan Eric and Rattan, Gaurav and Grohe, Martin},
  booktitle={Proceedings of the AAAI conference on artificial intelligence},
  volume={33},
  pages={4602--4609},
  year={2019}
}

@article{andrade2022physical,
  title={The physical cell identity assignment problem: a practical optimization approach},
  author={Andrade, Carlos E and Pessoa, Luciana S and Stawiarski, Slawomir},
  journal={IEEE Transactions on Evolutionary Computation},
  volume={28},
  number={2},
  pages={282--292},
  year={2022},
  publisher={IEEE}
}

@article{panxing2016pci,
  title={{PCI} planning method based on genetic algorithm in {LTE} network},
  author={Li, Panxing and Wang, Jing},
  journal={Telecommunications Science},
  volume={32},
  number={3},
  pages={2016082},
  year={2016}
}

@article{gui2018pci2,
  title={{PCI} planning based on binary quadratic programming in {LTE}/{LTE}-a networks},
  author={Gui, Jihong and Jiang, Zhipeng and Gao, Suixiang},
  journal={IEEE Access},
  volume={7},
  pages={203--214},
  year={2018},
  publisher={IEEE}
}

\vfill

\end{document}